%% file: main.tex
\documentclass[runningheads]{llncs}
\usepackage[T1]{fontenc}
\usepackage{graphicx}
\usepackage{booktabs}
\usepackage[misc]{ifsym}
\newcommand{\corr}{(\Letter)}
\usepackage{amsmath,amssymb}
\usepackage{xspace}
\usepackage{enumitem}
\usepackage{multirow}
\usepackage{xcolor}
\usepackage{colortbl}
\usepackage[hidelinks]{hyperref}
\usepackage{cite}
\usepackage{algorithm}
\usepackage[noend]{algorithmic}

\graphicspath{{./}}

\newcommand{\slc}{\textsc{SLC}\xspace}
\newcommand{\pp}{\,\text{pp}}
\DeclareMathOperator{\sigmoid}{\sigma}
\DeclareMathOperator{\logit}{logit}
\DeclareMathOperator{\AUC}{AUC}

\DeclareMathOperator{\Ber}{Ber}
\DeclareMathOperator{\Var}{Var}

\newcommand{\base}{\textsf{Base}\xspace}
\newcommand{\platt}{\textsf{Platt}\xspace}
\newcommand{\plattt}{\textsf{Platt-T}\xspace}
\newcommand{\iso}{\textsf{Iso}\xspace}
\newcommand{\hist}{\textsf{Hist}\xspace}
\newcommand{\rescal}{\textsf{ResCal}\xspace}
\newcommand{\rescaliso}{\textsf{ResCal{+}Iso}\xspace}
\newcommand{\naive}{\textsf{Naive}\xspace}
\newcommand{\slct}{\textsc{SLC}\textsf{-T}\xspace}
\newcommand{\slciso}{\textsc{SLC}\textsf{+Iso}\xspace}
\newcommand{\SD}[1]{{\scriptsize$\,\pm\,$#1}}

\begin{document}

\title{Recovering Stranded Discrimination in Knowledge Tracing:\\Per-Item Bias Correction via Empirical-Bayes Shrinkage}
\titlerunning{Recovering Stranded Discrimination in Knowledge Tracing}

\toctitle{Recovering Stranded Discrimination in Knowledge Tracing: Per-Item Bias Correction via Empirical-Bayes Shrinkage}
\tocauthor{Xiaoran Yan, Cheng Tang, Atsushi Shimada}

\author{Xiaoran Yan \and
Cheng Tang \and
Atsushi Shimada \corr}
\authorrunning{X. Yan et al.}
\institute{Kyushu University, Fukuoka, Japan\\
\email{xiaoran.y@outlook.com}, \email{tang@limu.ait.kyushu-u.ac.jp},\\
\email{atsushi@ait.kyushu-u.ac.jp}}

\maketitle

\begin{abstract}
Deployed knowledge-tracing models are typically frozen after training, yet systematic per-item logit bias arises---from limited per-item expressivity in backbone architectures and from post-deployment shifts in item properties---degrading prediction quality.
Global post-hoc calibrators such as Platt scaling, temperature scaling, and isotonic regression improve probability estimates but leave discriminative ability, as measured by AUC, unchanged.
This AUC invariance is a structural consequence of monotone score-only transforms; recovering the stranded discrimination requires conditioning on item identity.
We propose \slc (State-space Logit Correction), which converts binary observations to Gaussian pseudo-observations via Laplace/IRLS, applies empirical-Bayes shrinkage through a Kalman smoother, and fits an offset-Platt link.
The state-space formulation also yields a detectability bound that characterizes the Bernoulli information floor, explaining why temporal tracking provides no benefit at current data densities.
Across four datasets, five backbones, and three seeds, \slc improves AUC on all four datasets and NLL on three, with the advantage concentrating on sparse items.
Cross-domain controls suggest that the same phenomenon can arise beyond education when the deployed backbone leaves entity-level bias.

\keywords{Knowledge tracing \and Post-hoc correction \and Per-item bias estimation \and Temporal drift \and Empirical Bayes shrinkage.}
\end{abstract}

\section{Introduction}
\label{sec:intro}

Knowledge-tracing (KT) models~\cite{corbett1995kt} estimate the probability that a student will answer an item correctly; these probabilities drive adaptive item selection, mastery gating, and early-warning systems.
Per-item logit bias arises from two sources: backbone architectures with limited per-item expressivity produce structural prediction errors, and post-deployment shifts in item properties~\cite{lee2023kt_longitudinal,lee2025concept_drift_kt}---difficulty changes, new items, population evolution---further compound them.

The standard response is post-hoc calibration: Platt scaling~\cite{platt1999probabilistic}, temperature scaling~\cite{guo2017calibration}, or isotonic regression~\cite{zadrozny2002transforming}.
These methods improve probability estimates, yet the model's \emph{discriminative} ability---its AUC---remains unchanged because global score-only transforms adjust the \emph{scale} of predictions but not their \emph{ordering}.
We draw a sharp distinction: \textbf{calibration} adjusts probability scale (AUC-invariant); \textbf{correction} recovers ranking quality (AUC-improving). This paper addresses correction.

Figure~\ref{fig:symptom} illustrates this on ASSISTments~2017 (temporal split, 5~backbones): raw and Platt-scaled AUC are identical at every time slice, yet per-item residual correction (\rescal) recovers $+2.6\pp$ of hidden headroom (Eedi: $+3.5\pp$); \slc further improves on this via shrinkage.

\begin{figure}[t]
  \centering
  \includegraphics[width=0.95\textwidth]{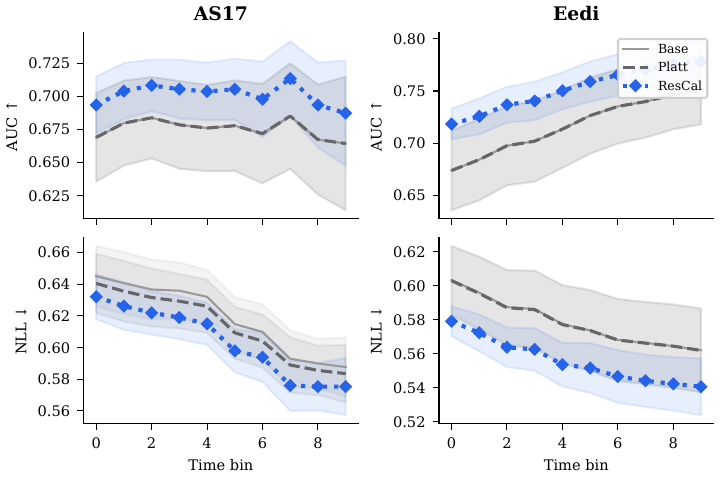}
  \caption{ASSISTments~2017, temporal split, 5~backbones averaged. \textbf{Left:} AUC over time for the raw backbone (\base) and Platt scaling (\platt)---the curves are identical, confirming AUC invariance of global calibration. \textbf{Right:} per-item residual correction (\rescal) recovers $+2.6\pp$ of hidden AUC headroom that global calibration structurally cannot access.}
  \label{fig:symptom}
\end{figure}

This behavior is structural: Lemma~\ref{lem:auc} shows that any strictly monotone, score-only transformation preserves rankings and hence AUC; recovering headroom therefore requires conditioning on item identity.
We propose \slc (State-space Logit Correction)\footnote{Code: \url{https://github.com/xiaoran-y/SLC}}, which models per-item bias as a Gaussian random effect~\cite{efron1973stein}, converts binary observations to Gaussian pseudo-observations via Laplace/IRLS, and pools them with a Kalman smoother~\cite{durbin2012time,fahrmeir1992kalman}.
The state-space formulation also yields a detectability bound (Proposition~\ref{prop:detect}) that quantifies when temporal tracking becomes viable.
The corrected prediction takes the form
\begin{equation}\label{eq:model}
  p = \sigmoid\!\big(a\,\eta_0 + b_0 + \hat{b}_i\big),
\end{equation}
where $\eta_0=\logit(p_0)$ is the frozen backbone logit and $(a,b_0)$ are global affine parameters.
The framework extends to temporal tracking $\beta(i,t)=b_i+u_i(t)$, but Proposition~\ref{prop:detect} shows that the minimum detectable drift far exceeds observed temporal variation at current KT data densities.

Our contributions are:
\begin{enumerate}[leftmargin=*,nosep]
\item \textbf{Stranded discrimination.} We identify per-item ``stranded'' AUC headroom in deployed KT models. AUC invariance of monotone score-only transforms (Lemma~\ref{lem:auc}) serves as a diagnostic; a five-level baseline ladder confirms that neither score-only nor time-only conditioning recovers this headroom across all 20~configurations.

\item \textbf{Per-item shrinkage pipeline.} We propose \slc: Laplace/IRLS pseudo-observations, empirical-Bayes shrinkage via Kalman smoothing, and an offset-Platt link. The additive per-item form is theoretically motivated (Proposition~\ref{prop:additive}); a detectability bound (Proposition~\ref{prop:detect}) explains why temporal tracking is information-limited at current densities and predicts the viability threshold (on the order of $10^5$ obs/item).

\item \textbf{Comprehensive evaluation.} Four KT datasets, five backbones, three seeds; density-stratified analysis, calibration-fraction sweep, synthetic regime map, and cross-domain controls including a non-KT flight-delay experiment.
\end{enumerate}

\section{Related Work}
\label{sec:related}

\subsection{Knowledge-Tracing Models}

DKT~\cite{piech2015dkt} applies recurrent networks; SAKT~\cite{pandey2019sakt} and AKT~\cite{ghosh2020akt} use self-attention; DKVMN~\cite{zhang2017dkvmn} augments memory networks; LPKT~\cite{shen2021lpkt} models the learning process explicitly.
These architectures differ in per-item expressivity: DKT, SAKT, and DKVMN operate at the skill level and share representations across items within a skill, while AKT and LPKT include per-item parameters.
Even with per-item modeling, frozen backbones accumulate residual per-item bias after deployment.
\slc is post-hoc and backbone-agnostic: it corrects this residual bias from any frozen model's logits without retraining.

\subsection{Post-Hoc Calibration}

Post-hoc calibration adjusts predicted probabilities to match observed frequencies.
Platt scaling~\cite{platt1999probabilistic}, temperature scaling~\cite{guo2017calibration}, isotonic regression~\cite{zadrozny2002transforming}, and histogram binning~\cite{zadrozny2001obtaining} are all score-only transforms; strictly monotone variants leave AUC invariant (Lemma~\ref{lem:auc}).
ECE is not a proper scoring rule~\cite{gruber2022better}; we treat NLL as a co-primary metric and ECE as a diagnostic.

\subsection{Per-Group and Per-Instance Calibration}

Several works condition calibration on input features, including class-wise~\cite{frenkel2021cts} and parameterized~\cite{tomani2022pts} temperature scaling, density-aware calibration~\cite{tomani2023dac}, and field-aware calibrators~\cite{pan2020field}.
\slc operates in a metadata-only regime (item~id + time index, no learned embeddings).
Class-wise scaling with $K{\gg}1$ categories reduces to unregularized per-item estimation (our \naive baseline); \slc adds shrinkage.
In the static limit, \slc reduces to ridge logistic regression with per-item intercepts~\cite{breslow1993glmm}.
Logit adjustment~\cite{menon2021logitadj} shares the per-class offset idea but targets class imbalance.

\subsection{Temporal Adaptation and State-Space Models}

Test-time adaptation (e.g., Tent~\cite{wang2021tent}) modifies model parameters online, while dynamic IRT models~\cite{wang2013dynamic_irt,vtirt2023} jointly re-estimate ability and difficulty; both require either model access or full re-estimation.
\slc instead treats the backbone as frozen and adopts the Laplace/IRLS + Kalman inference techniques developed for state-space GLMMs~\cite{durbin2012time,fahrmeir1992kalman,breslow1993glmm} as a post-hoc per-item correction algorithm, with the resulting shrinkage paralleling James--Stein estimation~\cite{efron1973stein,efron2012large_scale}.

\section{Method}
\label{sec:method}

\subsection{Problem Setting}
\label{sec:setting}

A frozen KT backbone produces logits $\eta_0(x)=\logit(p_0(x))$ for each interaction $x=(s,i,t)$ (student $s$, item $i$, time index $t$).
Data is partitioned temporally: train $\to$ calibration $\to$ test (strictly later), exposing genuine drift.
The post-hoc correction uses $\eta_0$, labels $y\in\{0,1\}$, item IDs, and time indices from the calibration window only---no test labels, no backbone parameter updates.

\subsection{AUC Invariance of Score-Only Calibration}
\label{sec:lemma1}

The central structural observation is that global calibration is inherently unable to improve AUC.
This invariance is a classical fact~\cite{hanley1982roc,fawcett2006roc}; we restate it because it serves as the diagnostic for stranded headroom:

\begin{lemma}[AUC invariance]\label{lem:auc}
Let $s(x)\in\mathbb{R}$ be a scalar score and $\phi:\mathbb{R}\to\mathbb{R}$ be strictly increasing. Then $\AUC(\phi(s))=\AUC(s)$.
\end{lemma}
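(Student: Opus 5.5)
The plan is to fix a concrete definition of AUC and check that every ingredient of that definition is preserved by $\phi$. I will use the probabilistic (Mann--Whitney) form. Let $X^+$ be drawn from the positive class and $X^-$ independently from the negative class, with ties counted as one half:
\begin{equation*}
\AUC(s)=\Pr\big(s(X^+)>s(X^-)\big)+\tfrac12\Pr\big(s(X^+)=s(X^-)\big).
\end{equation*}
On a finite evaluation set this is just the average over all positive--negative pairs $(x^+,x^-)$ of $\mathbf{1}[s(x^+)>s(x^-)]+\tfrac12\mathbf{1}[s(x^+)=s(x^-)]$. The argument is identical in both readings, so I will write it once for a generic pair. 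If the reader prefers the ROC-curve definition (area under TPR versus FPR), I will first invoke the classical equivalence with the Mann--Whitney statistic~\cite{hanley1982roc,fawcett2006roc}, so that the pairwise form is the only one I need to handle.

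The key step is a pointwise order-equivalence. For any reals $u,v$, strict monotonicity of $\phi$ gives $u>v \iff \phi(u)>\phi(v)$. The forward direction is the definition. For the converse, suppose $u\le v$; then either $u=v$, which forces $\phi(u)=\phi(v)$, or $u<v$, which forces $\phi(u)<\phi(v)$. Either way $\phi(u)>\phi(v)$ fails. Strictness also gives injectivity, so $u=v\iff\phi(u)=\phi(v)$.

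Now apply this with $u=s(x^+)$ and $v=s(x^-)$. Both indicator terms in the pairwise summand take the same value under $s$ and under $\phi\circ s$. Summing or integrating over pairs then gives $\AUC(\phi(s))=\AUC(s)$. In the population version the events $\{s(X^+)>s(X^-)\}$ and $\{\phi(s(X^+))>\phi(s(X^-))\}$ coincide as sets, so their probabilities are equal. The same holds for the tie events, and measurability of $\phi\circ s$ is automatic because monotone functions are Borel.

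I expect no real obstacle. The only point that needs care is tie handling, which is exactly where strictness matters. A merely nondecreasing $\phi$, such as isotonic regression with flat segments or histogram binning, can merge distinct scores into ties and change the AUC. I will add a remark saying so, since it explains why the related-work discussion restricts the invariance claim to \emph{strictly} monotone variants.
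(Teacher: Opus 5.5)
Your proposal is correct and follows the same route as the paper: both use the pairwise (Mann--Whitney) characterization of AUC and observe that a strictly increasing $\phi$ preserves every positive--negative ordering. Your version is a bit more careful than the paper's one-line argument, because you handle ties explicitly through the injectivity of $\phi$, which the paper leaves implicit.
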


\begin{proof}
AUC equals the probability that a randomly drawn positive receives a higher score than a randomly drawn negative~\cite{hanley1982roc}.
Since $\phi$ is strictly increasing, $s(x_+)>s(x_-)$ iff $\phi(s(x_+))>\phi(s(x_-))$.
All pairwise orderings are preserved, and AUC---which depends only on these orderings---remains unchanged (see also~\cite{fawcett2006roc}).
\end{proof}

Platt scaling ($a>0$) and temperature scaling ($T>0$) satisfy strict monotonicity and are exactly AUC-invariant.
Isotonic regression and histogram binning are non-decreasing but piecewise constant, so they fall outside Lemma~\ref{lem:auc} and can in principle change AUC through ties; in practice the effect is negligible with continuous logits.
\plattt (score + time, no item id) changes AUC only marginally in our experiments: without item identity, per-item heterogeneity cannot be resolved.
The implication: \emph{recovering AUC headroom requires conditioning on at least the item identity.}

\subsection{Per-Item Correction Form}
\label{sec:prop1}

The following proposition is a standard conditional-mean projection argument; we state it to fix the form of the correction.

\begin{proposition}\label{prop:additive}
Let $\eta^*(s,i,t)$ denote the Bayes-optimal logit for student $s$, item $i$, time $t$, and let $\eta_0(s,i,t)$ be the frozen backbone logit.
Define the item-specific bias $\beta(i) = \mathbb{E}[\eta^* - \eta_0 \mid i]$ and the residual
\begin{equation}\label{eq:additive}
  \epsilon(s,i,t) = \eta^*(s,i,t) - \eta_0(s,i,t) - \beta(i),
\end{equation}
so that $\mathbb{E}[\epsilon \mid i] = 0$ by construction.
Then the MSE-optimal additive correction of $\eta_0$ that depends only on item identity is
\[
  \hat{\eta}(s,i,t) = \eta_0(s,i,t) + \beta(i).
\]
\end{proposition}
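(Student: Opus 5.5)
We read the statement as an $L^2$ projection problem. Consider corrections of the form $\hat\eta = \eta_0 + g(i)$, where $g$ ranges over all (measurable, square-integrable) functions of item identity alone. The objective is to minimize $\mathbb{E}\big[(\eta^* - \eta_0 - g(i))^2\big]$ under the joint distribution of $(s,i,t)$. Write $R = \eta^* - \eta_0$ for the residual logit. The problem then becomes: find the best approximation of $R$ in the subspace of $\sigma(i)$-measurable functions. The standard answer is the conditional expectation $\mathbb{E}[R\mid i] = \beta(i)$.

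\textbf{Step 1: decomposition.} Using the definition in \eqref{eq:additive}, for any $g$,
\[
  R - g(i) = \big(\beta(i) - g(i)\big) + \epsilon(s,i,t).
\]
Expanding the square gives
\[
  \mathbb{E}[(R-g(i))^2] = \mathbb{E}[(\beta(i)-g(i))^2] + 2\,\mathbb{E}[(\beta(i)-g(i))\,\epsilon] + \mathbb{E}[\epsilon^2].
\]

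\textbf{Step 2: cross term.} Apply the tower property, conditioning on $i$. Since $\beta(i)-g(i)$ is a function of $i$, the cross term equals $\mathbb{E}\big[(\beta(i)-g(i))\,\mathbb{E}[\epsilon\mid i]\big]$. This vanishes because $\mathbb{E}[\epsilon\mid i]=0$ by construction.

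\textbf{Step 3: conclusion.} The MSE is therefore $\mathbb{E}[(\beta(i)-g(i))^2] + \mathbb{E}[\epsilon^2]$. The second term does not depend on $g$, and the first is nonnegative. The first term is zero exactly when $g=\beta$ almost surely under the item marginal. Hence $\hat\eta=\eta_0+\beta(i)$ is MSE-optimal, and it is unique up to null sets. The minimal error is $\mathbb{E}[\epsilon^2]$, which is the within-item residual variance that no item-only correction can remove.

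\textbf{Main obstacle.} The calculation itself is routine; the delicate part is making the assumptions explicit. We need $\mathbb{E}[R^2]<\infty$ so that all expectations exist and the projection is well defined. We must also fix the distribution with respect to which $\mathbb{E}[\cdot\mid i]$ and the MSE are taken; it should be the deployment distribution over $(s,t)$ given $i$, the same one used to define $\beta$. Finally, we should state that optimality is measured in logit-space squared error, not log-loss. In the proof we would add a brief remark that the log-loss-optimal item offset generally differs because of the nonlinearity of $\sigmoid$. This is why the practical method estimates $\hat b_i$ through the Laplace/IRLS pseudo-observations rather than by directly averaging residuals.
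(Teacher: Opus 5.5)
Your proposal is correct and takes the same route as the paper. The paper's proof simply asserts that the conditional mean $\mathbb{E}[\eta^*-\eta_0\mid i]$ minimizes $\mathbb{E}[(\eta^*-\eta_0-f(i))^2]$ over functions of $i$; your decomposition, in which the cross term vanishes by the tower property, is the standard proof of that fact, and you also make the needed $L^2$ integrability assumption explicit.
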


\begin{proof}
Among all corrections of the form $\eta_0 + f(i)$, the MSE $\mathbb{E}[(\eta^* - \eta_0 - f(i))^2]$ is minimized by $f(i) = \mathbb{E}[\eta^* - \eta_0 \mid i] = \beta(i)$.
\end{proof}

Because $\beta(i)$ is defined as the conditional mean, $\mathbb{E}[\epsilon \mid i] = 0$ holds by construction.
How much AUC headroom per-item correction can recover depends on $\mathrm{Var}(\beta(i))$, the between-item component of the backbone error, which we evaluate empirically in Section~\ref{sec:experiments}.

A standard 1PL IRT model of item difficulty drift provides an idealized setting to verify the correction form.
Extending the conditioning to both item and time ($\beta(i,t){=}\mathbb{E}[\eta^*{-}\eta_0\mid i,t]$), the residual vanishes entirely:

\begin{corollary}[1PL exactness]\label{prop:irt}
Let $Y \sim \Ber(\sigmoid(\theta_s - b_i(t)))$ with $b_i(t) = b_i^{\mathrm{train}} + \Delta_i(t)$, and let the frozen backbone produce $\eta_0(s,i) = \alpha(\theta_s - b_i^{\mathrm{train}}) + c_0$ for some backbone scale factor $\alpha > 0$ and shift $c_0$ (distinct from the residual term in Proposition~\ref{prop:additive}).
Then the Bayes-optimal logit $\eta^* = \theta_s - b_i(t)$ is \textbf{exactly} recovered by the offset-Platt + per-item correction:
\[
  \eta^* = a\,\eta_0 + b + \beta(i,t), \qquad a = 1/\alpha,\;\; b = -c_0/\alpha,\;\; \beta(i,t) = -\Delta_i(t).
\]
The residual MSE is zero.
\end{corollary}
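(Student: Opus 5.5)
The argument is a direct algebraic inversion of the backbone's affine map, followed by a check that the resulting item--time offset equals the conditional mean used in Proposition~\ref{prop:additive}.

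\textbf{Step 1: invert the backbone.} Since $\alpha>0$, we can solve $\eta_0(s,i)=\alpha(\theta_s-b_i^{\mathrm{train}})+c_0$ for the training-time logit. This gives
\[
\theta_s-b_i^{\mathrm{train}}=\frac{\eta_0-c_0}{\alpha}=a\,\eta_0+b,
\]
with $a=1/\alpha$ and $b=-c_0/\alpha$. The pair $(a,b)$ is global: it depends on neither $s$, $i$, nor $t$, so it is exactly the kind of parameter the offset-Platt link fits.

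\textbf{Step 2: insert the drift.} The Bayes-optimal logit is the true log-odds, which gives
\[
\eta^*=\theta_s-b_i^{\mathrm{train}}-\Delta_i(t)=a\,\eta_0+b-\Delta_i(t).
\]
Matching this with the claimed form identifies $\beta(i,t)=-\Delta_i(t)$, a quantity that depends only on $(i,t)$.

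\textbf{Step 3: match the conditional-mean definition.} We must confirm that this $\beta(i,t)$ is the item--time conditional mean of the residual after the affine recalibration, namely $\mathbb{E}[\eta^*-(a\eta_0+b)\mid i,t]$. This is the time-indexed analogue of $\beta(i)$ in Proposition~\ref{prop:additive}, applied to the recalibrated logit $a\eta_0+b$ rather than to $\eta_0$. Here the quantity inside the expectation equals $-\Delta_i(t)$ pointwise, independently of $s$, so its conditional mean is itself. Hence $\epsilon(s,i,t)\equiv 0$ and the residual MSE is zero.

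\textbf{Main subtlety.} The only real obstacle is conventional rather than technical. Proposition~\ref{prop:additive} defines $\beta$ relative to the raw $\eta_0$, with no Platt stage. If $\alpha\neq 1$, the raw residual $\eta^*-\eta_0=(1-\alpha)(\theta_s-b_i^{\mathrm{train}})-c_0-\Delta_i(t)$ depends on $\theta_s$, so a per-item offset alone cannot give zero residual. The proof therefore has to state explicitly that the affine map $(a,b)$ is applied first and that $\beta$ is the conditional mean relative to $a\eta_0+b$. I would also note that $(a,b,\beta)$ is identified only up to a constant shift between $b$ and $\beta$. The stated choice is the one consistent with defining $\Delta_i(t)$ as drift from training values.
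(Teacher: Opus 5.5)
Your proposal is correct and follows the paper's proof, which is exactly your Steps 1--2: substitute $\theta_s - b_i^{\mathrm{train}} = (\eta_0 - c_0)/\alpha$ into $\eta^* = \theta_s - b_i^{\mathrm{train}} - \Delta_i(t)$ and read off $a$, $b$, and $\beta(i,t)$. Your Step 3 adds something the paper's one-line proof does not address: $\beta$ must be the conditional mean of $\eta^* - (a\eta_0+b)$, not of $\eta^*-\eta_0$ as the paper's lead-in sentence writes it, since for $\alpha\neq 1$ the latter depends on $\theta_s$ and does not give zero residual.
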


\begin{proof}
$\eta^* = \theta_s - b_i^{\mathrm{train}} - \Delta_i(t) = \frac{\eta_0 - c_0}{\alpha} - \Delta_i(t) = \frac{1}{\alpha}\,\eta_0 - \frac{c_0}{\alpha} - \Delta_i(t) = a\,\eta_0 + b + \beta(i,t).$
\end{proof}

Under 1PL drift, the exact correction $\beta(i,t)$ is student-free and additive (not scaled by $a$); the static $\hat{b}_i$ averages over time.
Synthetic experiments (Section~\ref{sec:regime}) verify robustness beyond the 1PL assumption.

\subsection{Model Specification}
\label{sec:model}

We formalize the per-item bias as a Gaussian random effect within a generalized linear mixed model (GLMM):
\begin{equation}\label{eq:ssglmm}
  y \sim \Ber\!\big(\sigmoid(a\,\eta_0 + b_0 + b_i)\big), \qquad b_i \sim \mathcal{N}(0, \sigma_b^2).
\end{equation}

Here $(a, b_0)$ absorb global scale/shift distortion, $b_i$ captures per-item logit shift, and $\mathcal{N}(0, \sigma_b^2)$ provides shrinkage.
Critically, $b_i$ enters as an additive offset, not scaled by $a$, matching Corollary~\ref{prop:irt}; the alternative form $\sigmoid(a(\eta_0+\hat{b}_i)+b_0)$ performs worse empirically (Section~\ref{sec:ablations}).
\slc fits this parameterization in two stages: $\{b_i\}$ are first estimated under the working model $p{=}\sigma(\eta_0 {+} b_i)$ with implicit $(a,b_0){=}(1,0)$; $(a, b_0)$ are then fit treating the $\hat{b}_i$ as fixed offsets (Section~\ref{sec:link}).
Since $b_i$ is additive and independent of $a$ (Corollary~\ref{prop:irt}), the two stages do not interfere; this separation also preserves the diagonal Hessian that enables $O(N{+}K)$ per-item estimation (appendix).

The GLMM also admits a state-space extension: $\beta(i,t) = b_i + u_i(t)$, $u_i(t) = u_i(t{-}1) + \varepsilon_t$, $\varepsilon_t \sim \mathcal{N}(0, \sigma_u^2)$.
However, temporal tracking does not improve AUC or NLL on our datasets (Section~\ref{sec:ablations}, Proposition~\ref{prop:detect}); the default \slc uses static $b_i$.

\subsection{Estimation via Kalman Smoother}
\label{sec:estimation}

The binary likelihood does not admit conjugate Kalman updates, so we linearize it with the standard Laplace approximation~\cite{durbin2012time}.
Time bins are equal-count (quantile) partitions of cumulative interaction indices from the train and calibration windows; for the static model the final estimate pools across all bins and is invariant to the bin definition.
For each (item, time-bin) cell $(i,t)$, given a current estimate $\hat{b}_i^{(\text{prev})}$, we compute predicted probabilities $p_n = \sigmoid(\eta_0(x_n) + \hat{b}_i^{(\text{prev})})$ for each observation $n$ in the cell and then form
\begin{align}
  W_{i,t} &= \sum_{n \in (i,t)} p_n(1-p_n), \label{eq:weight} \\
  z_{i,t} &= \hat{b}_i^{(\text{prev})} + \frac{\sum_{n \in (i,t)} (y_n - p_n)}{W_{i,t}}. \label{eq:pseudo}
\end{align}
These yield approximate Gaussian observations $z_{i,t} \sim \mathcal{N}(b_i, 1/W_{i,t})$, where $W_{i,t}$ is the Fisher information weight (effective sample size) and $z_{i,t}$ the pseudo-residual, bridging the binary likelihood to a Gaussian state-space model.
In the static case, the Kalman smoother reduces to a weighted-average shrinkage estimator:
\begin{equation}\label{eq:static}
  \hat{b}_i = \frac{\sum_t W_{i,t}\, z_{i,t}}{1/\sigma_b^2 + \sum_t W_{i,t}}.
\end{equation}
The shrinkage fraction $\lambda_i = \sum_t W_{i,t} / (1/\sigma_b^2 + \sum_t W_{i,t})$ ranges from near~0 for sparse items (estimate stays close to the prior mean) to near~1 for dense items (estimate tracks the empirical residual).
This is an empirical-Bayes shrinkage estimator~\cite{efron2012large_scale} that parallels James--Stein shrinkage~\cite{efron1973stein}: under the Laplace--normal proxy the estimator dominates the naive per-item mean in MSE whenever $K\ge 3$; in the original Bernoulli model the dominance is approximate, but the empirical gains on sparse items are substantial (Section~\ref{sec:ablations}).
The prior variance $\sigma_b^2 = 1.0$ is fixed throughout; the estimator is insensitive to this choice because $\lambda_i$ saturates rapidly for $W_i \gg 1/\sigma_b^2$ (sweeping $\sigma_b^2 \in [0.1, 10]$ on AS17 changes AUC by $<0.4\pp$).
In the static case, Eq.~\eqref{eq:static} is the MAP estimate of $\ell_2$-penalized logistic regression with per-item intercepts and the backbone logit as offset; the Hessian is diagonal because each observation involves exactly one item, so the Newton step decomposes into $K$ independent scalar updates (appendix).
Our deployed \slc uses a single step from $b_i^{(0)}{=}0$ (Algorithm~\ref{alg:slc}); the offset-Platt fit (Section~\ref{sec:link}) absorbs residual global bias.

Items with zero calibration observations receive $\hat{b}_i = 0$ (the prior mean), defaulting to \platt; for sparse items, $\lambda_i \to 0$ pulls the estimate toward the prior without any numerical floor.
For the temporal extension, Rauch--Tung--Striebel smoothing applies at cost $O(K \cdot T)$.

\subsection{Link Estimation}
\label{sec:link}

Given $\{\hat{b}_i\}$, we fit $(a, b_0)$ treating $\hat{b}_i$ as a fixed offset:
\begin{align*}
  (a^*, b_0^*) &= \arg\min_{a,b_0} \sum_n -y_n \log p_n - (1{-}y_n)\log(1{-}p_n), \\
  p_n &= \sigmoid(a\,\eta_0(x_n) + b_0 + \hat{b}_{i(n)}).
\end{align*}
This is a convex logistic regression with two parameters, solved by IRLS in a few iterations.
The offset-Platt link correctly treats $\hat{b}_i$ as a random effect that should not be rescaled, matching the GLMM parameterization (Eq.~\eqref{eq:ssglmm}) and Corollary~\ref{prop:irt}.

Alternatively, we can fit a monotone function $g$ via isotonic regression on the corrected logit $\tilde{\eta}_n = \eta_0(x_n) + \hat{b}_{i(n)}$.
This preserves rankings (hence AUC) but offers more flexibility than affine mapping, at the cost of higher variance on sparse data.
Offset-Platt is the recommended default.

\subsection{Temporal Drift Detectability Bound}
\label{sec:prop2}

We explain this via a statistical detectability bound:

\begin{proposition}[Detectability bound]\label{prop:detect}
Consider item $i$ observed across $T$ time bins.
Let $n_t = |\mathcal{C}_{i,t}|$ denote the number of observations in bin $t$, and let $\beta_t$ denote the true per-item logit bias in bin $t$.
Define the drift increment $\delta_t = \beta_t - \beta_{t-1}$.
Under the Laplace approximation (Section~\ref{sec:estimation}), the pseudo-observation for bin $t$ satisfies
\[
  z_{i,t} \;\sim\; \mathcal{N}(\beta_t,\; 1/W_{i,t}), \qquad W_{i,t} = \sum_{n \in \mathcal{C}_{i,t}} p_n(1-p_n).
\]
\begin{enumerate}[leftmargin=*,nosep,label=\textbf{(\roman*)}]
\item The Fisher weight is bounded: $W_{i,t} \le n_t / 4$, since $p(1-p) \le 1/4$ for all $p \in [0,1]$.
\item Treating adjacent bins as conditionally independent under the same approximation, the variance of the difference $z_{i,t} - z_{i,t-1}$ (which estimates $\delta_t$) is
\[
  \Var(z_{i,t} - z_{i,t-1}) = \frac{1}{W_{i,t}} + \frac{1}{W_{i,t-1}} \;\ge\; \frac{4}{n_t} + \frac{4}{n_{t-1}} \;\ge\; \frac{8}{n_{\min}},
\]
where $n_{\min} = \min(n_t, n_{t-1})$.
\item A Wald test at significance level $\alpha$ detects $\delta_t \neq 0$ only if
\begin{equation}\label{eq:detect}
  |\delta_t| \;\ge\; \delta_{\min} \;=\; z_\alpha \sqrt{\frac{1}{W_{i,t}} + \frac{1}{W_{i,t-1}}} \;\ge\; z_\alpha \sqrt{\frac{8}{n_{\min}}}.
\end{equation}
\end{enumerate}
\end{proposition}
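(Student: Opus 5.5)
The argument takes the Laplace pseudo-observation model of Section~\ref{sec:estimation} as given, namely $z_{i,t}\sim\mathcal{N}(\beta_t,1/W_{i,t})$ with $W_{i,t}=\sum_{n\in\mathcal{C}_{i,t}}p_n(1-p_n)$. Each of the three parts then follows from an elementary inequality applied in turn.

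\textbf{Part (i).} For $p\in[0,1]$ we have $p(1-p)=\tfrac14-(p-\tfrac12)^2\le\tfrac14$. Summing this over the $n_t$ observations in $\mathcal{C}_{i,t}$ gives $W_{i,t}\le n_t/4$.

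\textbf{Part (ii).} We use the stated working assumption that $z_{i,t}$ and $z_{i,t-1}$ are conditionally independent. This is natural because they are built from disjoint sets of observations, and under the Laplace proxy the only shared quantity is $\hat b_i^{(\text{prev})}$, which is treated as fixed. The variance of the difference is then the sum of the variances, $1/W_{i,t}+1/W_{i,t-1}$. Its mean is $\beta_t-\beta_{t-1}=\delta_t$, so the difference is an unbiased estimator of the increment. Inverting part (i) gives $1/W_{i,t}\ge 4/n_t$, and likewise for $t-1$. Each of $4/n_t$ and $4/n_{t-1}$ is at least $4/\max(n_t,n_{t-1})$, and the larger one is at least $4/n_{\min}$. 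This needs care: the sum is bounded below by $8/\max(n_t,n_{t-1})$, not by $8/n_{\min}$ in general. So I would either read $n_{\min}$ in the proof as the relevant bin count under equal-count binning, where $n_t=n_{t-1}$ and the two bounds coincide, or state the sharper general chain
\[
\frac{4}{n_t}+\frac{4}{n_{t-1}}\ \ge\ \frac{4}{n_{\min}}+\frac{4}{\max(n_t,n_{t-1})}\ \ge\ \frac{8}{\max(n_t,n_{t-1})}.
\]
I would note explicitly that the $8/n_{\min}$ form is exact for the equal-count quantile bins actually used in Section~\ref{sec:estimation}.

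\textbf{Part (iii).} The Wald statistic is $Z=(z_{i,t}-z_{i,t-1})/\sqrt{1/W_{i,t}+1/W_{i,t-1}}$. Rejection at level $\alpha$ requires $|Z|\ge z_\alpha$. For the test to detect a drift of size $\delta_t$ with non-negligible power, the mean of $Z$ in absolute value, $|\delta_t|/\mathrm{sd}$, must reach the critical value. I would phrase this as the standard definition of the minimum detectable effect: the drift for which the expected statistic sits at the rejection boundary, which gives power of about one half. If $|\delta_t|$ falls well below $z_\alpha\cdot\mathrm{sd}$, the power is near $\alpha$. This yields $\delta_{\min}=z_\alpha\sqrt{1/W_{i,t}+1/W_{i,t-1}}$, and substituting the bound from part (ii) gives the final inequality in Eq.~\eqref{eq:detect}.

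The main obstacle is not the algebra. It is making "detects only if" precise, since a Wald test can reject by chance for any $\delta_t$. I would handle this by defining detectability through the expected statistic, or equivalently by a power threshold of at least $1/2$, and then verifying that this definition yields exactly the displayed $\delta_{\min}$.
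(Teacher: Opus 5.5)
Part (i) and the independence step in part (ii) match the paper. You are right that the last inequality in (ii) fails, and the paper's proof misses this: it only says that applying (i) and then $1/W\ge 4/n$ ``yields the bound.'' Since $n_t,n_{t-1}\ge n_{\min}$, one actually has $\frac{4}{n_t}+\frac{4}{n_{t-1}}\le \frac{8}{n_{\min}}$, with equality iff $n_t=n_{t-1}$. For example, $n_t=2$, $n_{t-1}=8$ gives $2.5<4$. So the $8/n_{\min}$ bounds in (ii) and in Eq.~\eqref{eq:detect} are false whenever adjacent counts differ.

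Your equal-count rescue does not work. The quantile bins of Section~\ref{sec:estimation} partition the \emph{pooled} cumulative interaction index. They equalize total counts per bin over all items, not the per-item counts $n_t=|\mathcal{C}_{i,t}|$. A given item generally has different numbers of observations in adjacent bins, which is why the paper reports obs/bin as a median. So $n_t=n_{t-1}$ can only be an explicit idealization, not a fact about the binning.

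Your $8/\max(n_t,n_{t-1})$ chain is valid; its first step is actually an equality. A sharper repair uses convexity of $x\mapsto 1/x$: $\frac{4}{n_t}+\frac{4}{n_{t-1}}\ge \frac{16}{n_t+n_{t-1}}=\frac{8}{\bar n}$, where $\bar n=(n_t+n_{t-1})/2$. This reduces to the stated form when the counts are equal. It also matches how the numerical thresholds are computed from average per-bin counts. Part (iii) then becomes $\delta_{\min}\ge z_\alpha\sqrt{8/\bar n}$.

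On part (iii), your route differs from the paper's and is more careful. The paper goes straight from the rejection rule $|Z|\ge z_\alpha$ to $|\delta_t|\ge z_\alpha\sqrt{1/W_{i,t}+1/W_{i,t-1}}$. In doing so it silently replaces the observed difference $z_{i,t}-z_{i,t-1}$ by its mean $\delta_t$. The literal ``only if'' is false, since the two-sided test rejects with probability at least $\alpha$ for every $\delta_t$. Defining $\delta_{\min}$ as the increment at which $|\mathbb{E}Z|$ reaches $z_\alpha$ (a minimum detectable effect) turns the claim into a correct statement.

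One caveat: for the two-sided test, the power at that boundary is $\tfrac12+\Phi(-2z_\alpha)$, where $\Phi$ is the standard normal CDF. So ``power at least $1/2$'' is only approximately equivalent to your definition. Keep the expected-statistic definition, or use a one-sided test, if you want exact agreement.
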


\begin{proof}\leavevmode
\begin{enumerate}[leftmargin=*,nosep,label=(\roman*)]
\item $W_{i,t} = \sum_{n} p_n(1{-}p_n) \le \sum_{n} \tfrac{1}{4} = n_t/4$, with equality when all $p_n = 1/2$.
\item Under this approximation, we treat $z_{i,t}$ and $z_{i,t-1}$ as conditionally independent because they are computed from disjoint observation sets, so $\Var(z_{i,t} - z_{i,t-1}) = 1/W_{i,t} + 1/W_{i,t-1}$. Applying~(i) and then $1/W \ge 4/n$ yields the bound.
\item The Wald statistic
\[
  Z = (z_{i,t} - z_{i,t-1})\big/\!\sqrt{1/W_{i,t} + 1/W_{i,t-1}}
\]
is $\mathcal{N}(0,1)$ under $H_0{:}\,\delta_t{=}0$. Rejection at level $\alpha$ requires $|Z| \ge z_\alpha$, i.e.\ $|\delta_t| \ge z_\alpha\sqrt{1/W_{i,t} + 1/W_{i,t-1}}$.
\end{enumerate}
\end{proof}

On ASSISTments~2017 (strongest drift), the median item has 48 observations across 10 bins (${\sim}4.8$/bin, $W_{\max}\le 1.20$).
At $\alpha=0.05$: $\delta_{\min} \approx 2.53$~logit, i.e., an item's correct rate would need to shift from 50\% to 92\% within a single bin for the change to be detectable.
Table~\ref{tab:detect} reports all four datasets.
As an independent check, we compute the median observed adjacent-bin logit change $|\hat\delta_{\text{obs}}|$ from empirical per-item correct rates; this is an \emph{upper bound} on true drift because it includes estimation noise.
On every dataset, $|\hat\delta_{\text{obs}}| \ll \delta_{\min}$, confirming that temporal tracking operates deep in the noise-dominated regime.
Reducing $T$ lowers $\delta_{\min}$ (as $\sqrt{T}$), but also concentrates the signal into fewer comparisons with higher per-bin variance; at the extreme $T{=}2$ on AS17, $\delta_{\min}$ drops to ${\approx}1.13$~logit (50\%$\to$75\% correct rate), still implausible.
The Kalman smoother implicitly optimizes this resolution--variance trade-off; its uniform loss to the static estimator in all 20 configurations (Section~\ref{sec:ablations}) confirms that no temporal granularity recovers useful signal at these densities.
Rearranging Eq.~\eqref{eq:detect} suggests a rough viability threshold on the order of $10^5$ total obs/item for a moderate drift of $\delta = 0.05$~logit/bin.

\subsection{Algorithm Summary}
\label{sec:algo}

Algorithm~\ref{alg:slc} summarizes the complete \slc pipeline.
Lines~2--6 convert the binary observations in each (item, time-bin) cell into Gaussian pseudo-observations $(z_{i,t}, W_{i,t})$ via the Laplace approximation of Section~\ref{sec:estimation}; lines~7--9 apply the empirical-Bayes shrinkage of Eq.~\eqref{eq:static} to obtain the per-item offsets $\hat{b}_i$; line~10 fits the offset-Platt link of Section~\ref{sec:link}.
The only hyperparameter is the prior variance $\sigma_b^2$, fixed to $1.0$ throughout.

\begin{algorithm}[t]
\caption{\slc (State-space Logit Correction)}\label{alg:slc}
\begin{algorithmic}[1]
\REQUIRE Backbone logits $\{\eta_{0,n}\}_{n=1}^{N}$, calibration labels $\{y_n\}$, item IDs $\{i_n\}$, time indices $\{t_n\}$, prior variance $\sigma_b^2$
\ENSURE Corrected probabilities $\{\hat{p}_n\}$ for new observations
\STATE Partition calibration data into $(i,t)$ cells: $\mathcal{C}_{i,t} \gets \{n : i_n = i,\, t_n = t\}$
\FOR{each cell $(i,t)$ with $|\mathcal{C}_{i,t}| > 0$}
  \STATE $p_n \gets \sigmoid(\eta_{0,n})$ for $n \in \mathcal{C}_{i,t}$
  \STATE $W_{i,t} \gets \sum_{n \in \mathcal{C}_{i,t}} p_n(1 - p_n)$
  \STATE $z_{i,t} \gets \sum_{n \in \mathcal{C}_{i,t}} (y_n - p_n)\, / \, W_{i,t}$
\ENDFOR
\FOR{each item $i \in \{1,\ldots,K\}$}
  \STATE $\hat{b}_i \gets \frac{\sum_t W_{i,t}\, z_{i,t}}{1/\sigma_b^2 + \sum_t W_{i,t}}$
\ENDFOR
\STATE Fit $(a^*, b_0^*) \gets \arg\min_{a,b_0} \sum_n \mathcal{L}_{\text{BCE}}(y_n,\, \sigmoid(a\,\eta_{0,n} + b_0 + \hat{b}_{i_n}))$
\STATE \textbf{Prediction:} $\hat{p}_n = \sigmoid(a^*\,\eta_{0,n} + b_0^* + \hat{b}_{i_n})$
\end{algorithmic}
\smallskip
Complexity is $O(N + K \cdot T)$. On AS17 ($N{\approx}190$k, $K{\approx}3$k), wall-clock time is under 2\,s on a single CPU core. All reported experiments use this single-pass blockwise fit.
\end{algorithm}

\section{Experiments}
\label{sec:experiments}

\subsection{Setup}
\label{sec:setup}

We evaluate on four KT benchmarks spanning a range of drift intensities and data densities (Table~\ref{tab:datasets}).
ASSISTments~2017 (AS17)~\cite{assistments2017dataset} exhibits strong temporal drift with moderate data density; Eedi~\cite{wang2021eedi} exhibits moderate drift with similar density; ASSISTments~2009 (AS09)~\cite{feng2009assistments} has weak drift and extreme sparsity (median 3 observations per item); and Algebra~\cite{stamper2016kddcup} has weak-to-moderate drift with the most extreme sparsity (median 1 observation per item).
Throughout the paper, Algebra denotes a merged dataset constructed from the 2005--2006, 2006--2007, and 2008--2009 releases; we merge these releases to obtain a longer and less fragmented temporal horizon, making drift-trend analysis under strict temporal splits more stable.
All datasets use strict temporal splits with no overlap between training, calibration, and test windows.
This protocol differs from standard KT benchmarks that use random student-level splits, and is essential for exposing genuine temporal drift.
The fraction of test tokens whose item was never observed during calibration (cold-start) is small on AS17 (1.0\%) and Eedi (0.3\%), but substantial on AS09 (12.3\%) and Algebra (24.6\%); for these tokens \slc defaults to the global Platt prediction ($\hat{b}_i{=}0$).

\begin{table}[t]
\caption{Dataset characteristics. Observations per item and per bin are medians computed over the calibration window.}\label{tab:datasets}
\centering
\small
\begin{tabular}{lccccl}
\toprule
Dataset & Drift & obs/item & time bins & obs/bin & Regime \\
\midrule
AS17 & Strong & 48 & 10 & $\sim$4.8 & Clear per-item benefit \\
Eedi & Moderate & 54 & 10 & $\sim$5.4 & Clear per-item benefit \\
AS09 & Weak & $\sim$3 & 5 & $\sim$0.6 & Extreme sparsity \\
Algebra (merged) & Weak--Mod & $\sim$1 & 5 & $\sim$0.2 & Extreme sparsity \\
\bottomrule
\end{tabular}
\end{table}

We evaluate five KT backbones: AKT, DKT, SAKT, DKVMN, and LPKT.
Together they cover the main KT architecture families~\cite{piech2015dkt,pandey2019sakt,ghosh2020akt,zhang2017dkvmn,shen2021lpkt}.
Each uses three seeds (225, 226, 227); we report mean and standard deviation across seeds.
All experiments use a strict temporal split, train $\to$ calibration $\to$ test, with all hyperparameters selected by train-only rolling backtest and no test labels used at any stage.
Our co-primary metrics are AUC (discrimination) and NLL (proper scoring rule). ECE is reported as a diagnostic, not a primary criterion: it is not a proper scoring rule~\cite{gruber2022better}, and NLL improvements can coexist with worse ECE.
Table~\ref{tab:ladder} organizes the baselines by conditioning structure.

\begin{table}[t]
\caption{Baseline ladder. Each level adds richer conditioning to test which factor drives AUC recovery. The classic score-only calibrators are summarized in the appendix.}\label{tab:ladder}
\centering
\small
\begin{tabular}{clll}
\toprule
Level & Method & Conditioning & Tests \\
\midrule
0 & \base & None & Raw backbone \\
1 & \platt (+ TS/\iso/\hist) & Score only (global) & AUC-invariant? (Lemma~\ref{lem:auc}) \\
2 & \plattt & Score + time (no item id) & Does time-only help? \\
3 & \rescal & Score + item (no smooth) & Per-item helps AUC? \\
3+ & \rescaliso & Score + item + isotonic & Strong per-item baseline \\
4 & \naive & Score + item $\times$ time & Smoothing needed? \\
\textbf{5} & \textbf{\slc} & Score + item + Kalman & \textbf{Smoothed per-item} \\
5\textsuperscript{t} & \slct & + temporal tracking & Temporal helps? \\
5\textsuperscript{n} & \slciso & + isotonic link & Link comparison \\
\bottomrule
\end{tabular}
\end{table}

\platt fits $p=\sigmoid(a\,\eta_0+b)$; \plattt adds a time covariate $p=\sigmoid(a\,\eta_0+c\,t+b)$.
\rescal estimates per-item logit offsets by matching empirical and predicted success rates (with a minimum-count threshold); \rescaliso adds isotonic regression on top.
\naive is a per-(item, time-bin) running-average offset with no smoothing, isolating the value of Kalman shrinkage.

\subsection{Main Results}
\label{sec:main_results}

Tables~\ref{tab:auc} and~\ref{tab:nll} present AUC and NLL results averaged over five backbones and three seeds.

\begin{table}[t]
\caption{AUC ($\uparrow$) averaged over 5~backbones $\times$ 3~seeds ($\pm$~avg seed std). Best per column in \textbf{bold}; per-backbone breakdown in the appendix.}\label{tab:auc}
\centering
\small
\begin{tabular}{l cccc}
\toprule
Method & Algebra & AS17 & Eedi & AS09 \\
\midrule
\base           & 0.8135\SD{0.0004} & 0.6814\SD{0.0010} & 0.7189\SD{0.0001} & 0.6782\SD{0.0069} \\
\platt          & 0.8135\SD{0.0004} & 0.6814\SD{0.0010} & 0.7189\SD{0.0001} & 0.6782\SD{0.0069} \\
\plattt         & 0.8135\SD{0.0004} & 0.6837\SD{0.0009} & 0.7190\SD{0.0001} & 0.6786\SD{0.0065} \\
\rescal         & 0.8243\SD{0.0005} & 0.7065\SD{0.0009} & 0.7534\SD{0.0002} & 0.6784\SD{0.0069} \\
\rescaliso      & 0.8243\SD{0.0005} & 0.7064\SD{0.0009} & 0.7534\SD{0.0002} & 0.6783\SD{0.0068} \\
\naive          & 0.8097\SD{0.0009} & 0.6946\SD{0.0008} & 0.7375\SD{0.0003} & 0.6653\SD{0.0029} \\
\midrule
\textbf{\slc}   & \textbf{0.8325}\SD{0.0007} & \textbf{0.7182}\SD{0.0008} & \textbf{0.7579}\SD{0.0002} & \textbf{0.7066}\SD{0.0049} \\
\slct           & 0.8313\SD{0.0008} & 0.7166\SD{0.0008} & 0.7572\SD{0.0002} & 0.7021\SD{0.0046} \\
\slciso         & 0.8319\SD{0.0006} & 0.7169\SD{0.0007} & 0.7573\SD{0.0002} & 0.7014\SD{0.0046} \\
\bottomrule
\end{tabular}
\end{table}

The first two rows of Table~\ref{tab:auc} confirm Lemma~\ref{lem:auc}: \base and \platt produce identical AUC.
Across the other score-only calibrators, temperature scaling matches \platt to four decimals of AUC on all four datasets, isotonic changes AUC negligibly, and histogram binning slightly lowers AUC through ties; the appendix reports the full table.
\plattt (score + time, not covered by Lemma~\ref{lem:auc}) changes AUC only marginally ($0.00$ to $+0.23\pp$ across datasets), indicating that time-only recalibration without item identity does not recover the stranded headroom.
The per-item baselines (\rescal, \rescaliso) unlock substantial headroom on AS17 ($+2.5\pp$) and Eedi ($+3.5\pp$) but fail on extremely sparse AS09 ($+0.02\pp$).

\slc improves AUC over \platt on every dataset, with gains correlating with drift intensity: $+3.68\pp$ (AS17), $+3.90\pp$ (Eedi), $+1.90\pp$ (Algebra).
These gains are broad across backbones: \slc improves AUC on all five backbones for AS17, AS09, and Algebra, and on four of five for Eedi; NLL follows a similar pattern, with isolated exceptions on sparse AS09 and one Eedi backbone (per-backbone tables in the appendix).
The most informative comparison is AS09: \rescal extracts negligible headroom ($+0.02\pp$) while \slc recovers $+2.84\pp$, demonstrating that Kalman shrinkage is essential in sparse regimes.
Conversely, \naive (unsmoothed per-item means) \emph{degrades} AUC by $-1.29\pp$ on AS09---unregularized estimation injects more noise than signal.

\begin{table}[t]
\caption{NLL ($\downarrow$) averaged over 5~backbones $\times$ 3~seeds ($\pm$~avg seed std). Best per column in \textbf{bold}.}\label{tab:nll}
\centering
\small
\begin{tabular}{l cccc}
\toprule
Method & Algebra & AS17 & Eedi & AS09 \\
\midrule
\base           & 0.341\SD{0.002} & 0.618\SD{0.001} & 0.579\SD{0.000} & 0.628\SD{0.015} \\
\platt          & 0.338\SD{0.001} & 0.613\SD{0.001} & 0.578\SD{0.000} & 0.592\SD{0.007} \\
\rescal         & 0.334\SD{0.002} & 0.603\SD{0.001} & 0.556\SD{0.000} & 0.627\SD{0.015} \\
\rescaliso      & 0.328\SD{0.001} & 0.600\SD{0.001} & 0.555\SD{0.000} & 0.593\SD{0.007} \\
\naive          & 0.451\SD{0.009} & 0.648\SD{0.003} & 0.579\SD{0.000} & 0.770\SD{0.033} \\
\midrule
\textbf{\slc}   & \textbf{0.326}\SD{0.001} & \textbf{0.593}\SD{0.001} & \textbf{0.552}\SD{0.000} & 0.596\SD{0.012} \\
\slct           & 0.328\SD{0.001} & 0.595\SD{0.001} & 0.553\SD{0.000} & 0.605\SD{0.013} \\
\slciso         & 0.329\SD{0.001} & 0.595\SD{0.001} & 0.553\SD{0.000} & 0.623\SD{0.012} \\
\bottomrule
\end{tabular}
\end{table}

NLL improvements (Table~\ref{tab:nll}) largely mirror AUC: \slc achieves the best NLL on Algebra, AS17, and Eedi.
On AS09, \slc incurs only a small NLL cost over \platt (0.596 vs.\ 0.592) while gaining $+2.84\pp$ AUC.
\naive confirms shrinkage importance: its AS09 NLL (0.770) is catastrophically worse than the uncalibrated backbone (0.628).
\slc's ECE is typically higher than \platt's, a structural consequence of per-item logit correction; since ECE is not a proper scoring rule~\cite{gruber2022better}, we rely on NLL to confirm net probabilistic benefit. Density-stratified analysis (Section~\ref{sec:density}) clarifies this relationship.

\subsection{The Role of Temporal Tracking}
\label{sec:ablations}

The core ablation compares \slc (static $b_i$) against \slct ($b_i + u_i(t)$) under an identical pipeline.
Static estimation produces better AUC in all 20 configurations and equal or better NLL in 19 of 20, with average AUC advantage of $+0.12\pp$ (Algebra), $+0.16\pp$ (AS17), $+0.07\pp$ (Eedi), $+0.46\pp$ (AS09).

\begin{table}[t]
\caption{Temporal drift detectability (Proposition~\ref{prop:detect}). $\delta_{\min}$: Wald detection threshold at $\alpha{=}0.05$; $|\hat\delta_{\text{obs}}|$: median observed adjacent-bin logit change (upper bound on true drift). On every dataset, $|\hat\delta_{\text{obs}}| \ll \delta_{\min}$.}\label{tab:detect}
\centering
\small
\begin{tabular}{lcccccc}
\toprule
Dataset & obs/item & bins & obs/bin & $W_{\max}$ & $\delta_{\min}$ & $|\hat\delta_{\text{obs}}|$ \\
\midrule
AS17    & 48 & 10 & 4.8 & 1.20 & 2.53 & 0.40 \\
Eedi    & 54 & 10 & 5.4 & 1.35 & 2.39 & 0.69 \\
AS09    &  3 &  5 & 0.6 & 0.15 & 7.16 & 1.07 \\
Algebra &  1 &  5 & 0.2 & 0.05 & 12.40 & 0.69 \\
\bottomrule
\end{tabular}
\end{table}

Table~\ref{tab:detect} applies Proposition~\ref{prop:detect} to each dataset: $\delta_{\min}$ exceeds the observed $|\hat\delta_{\text{obs}}|$ by $3.5$--$18{\times}$, consistent with the uniform loss of temporal \slct to static \slc reported above.
Proposition~\ref{prop:detect} also gives a rough viability criterion: for a moderate drift of $0.05$~logit/bin, temporal tracking would require on the order of $10^5$ total obs/item.
Offset-Platt also outperforms raw $\sigmoid(\eta_0 + \hat{b}_i)$ on ECE and NLL (AS09: 7.63\% vs 10.51\%, 0.596 vs 0.621; full breakdown in the appendix).
Empirical-Bayes shrinkage (\slc) dominates unsmoothed naive means by $+2.0$--$4.1\pp$ AUC across datasets.

\subsection{Where Does Kalman Shrinkage Help?}
\label{sec:density}

We stratify items into three density bins by observation count and compute $\Delta$AUC per bin (Table~\ref{tab:density}, Fig.~\ref{fig:density}).

\begin{table}[t]
\caption{Density-stratified $\Delta$AUC (\slc~$-$~\rescaliso) by item observation count. \slc's advantage concentrates on sparse items, where Kalman shrinkage prevents noise injection.}\label{tab:density}
\centering
\small
\begin{tabular}{lccc}
\toprule
Dataset & Bin~0 (sparse) & Bin~1 (medium) & Bin~2 (dense) \\
\midrule
AS17 \scriptsize{(1--23 / 24--74 / $\ge$75)}   & $+4.14\pp$ & $+2.36\pp$ & $+0.24\pp$ \\
Eedi \scriptsize{(1--28 / 29--116 / $\ge$117)}  & $+4.18\pp$ & $+1.28\pp$ & $+0.08\pp$ \\
AS09 \scriptsize{(1--2 / 3--6 / $\ge$7)}        & $+3.67\pp$ & $+2.85\pp$ & $+3.01\pp$ \\
Algebra \scriptsize{($=$1 / $\ge$2)}              & $+0.75\pp$ & --- & $+1.01\pp$ \\
\bottomrule
\end{tabular}
\end{table}

\begin{figure}[t]
  \centering
  \includegraphics[width=0.95\textwidth]{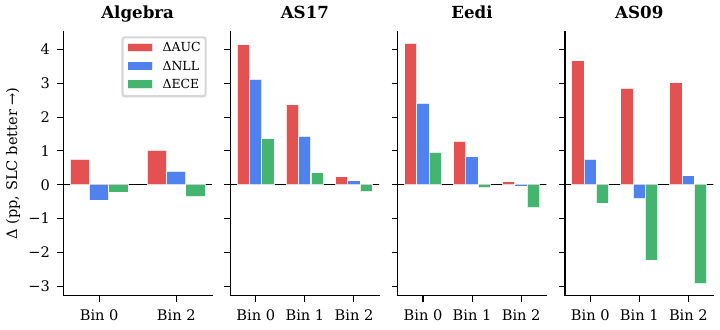}
  \caption{Density-stratified $\Delta$ metrics (\slc~$-$~\rescaliso), averaged over backbones and seeds. On sparse items (Bin~0), \slc achieves simultaneous improvements in AUC, NLL, and ECE---the regime where Kalman shrinkage is most valuable.}
  \label{fig:density}
\end{figure}

On AS17 and Eedi, the advantage decreases monotonically with density ($+4.1\pp$ sparse, $<0.3\pp$ dense); on sparse items \slc also improves NLL and ECE simultaneously.
On AS09, \slc improves across all bins because even the ``dense'' bin ($\ge$7 obs) is globally sparse.

\subsection{Regime Characterization}
\label{sec:regime}

Two analyses characterize when \slc helps and when temporal tracking becomes worthwhile.

\paragraph{Calibration-fraction sweep.}
Varying calibration fraction from 10\% to 100\%, $\Delta$AUC grows monotonically on both datasets (AS17: $+1.24$--$+3.68\pp$; AS09: $+0.16$--$+2.84\pp$).
On AS17, NLL improves in parallel; on AS09, the AUC gain comes with the same small NLL trade-off seen at full calibration fraction (see appendix).

\paragraph{Synthetic regime map.}
To disentangle drift intensity from data density, we run a 1PL-IRT simulation ($K{=}200$, $T{=}20$, 5~seeds) sweeping drift variance $Q$ and observations per item.
Figure~\ref{fig:regime}(a) confirms that $\Delta$AUC of static correction scales with $Q$ and is positive whenever $Q>0$.
Panel~(b) shows that temporal tracking adds $\le 0.7\pp$ even in the most favorable regime (obs$=$300, $Q{=}$0.2), consistent with Proposition~\ref{prop:detect}.
Panel~(c) shows that bias-estimation MSE drops to near zero at $Q{=}0$, confirming that the estimator does not inject spurious corrections when no drift is present.

\begin{figure}[t]
  \centering
  \includegraphics[width=\textwidth]{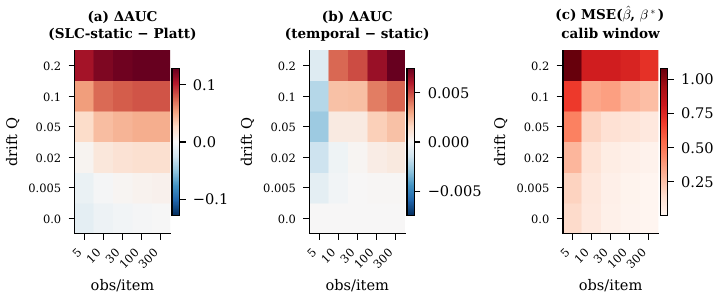}
  \caption{Synthetic regime map (1PL simulation, 5~seeds). \textbf{(a)}~$\Delta$AUC of static per-item correction scales with drift intensity $Q$. \textbf{(b)}~$\Delta$AUC of temporal over static: even at obs$=$300 the gain is $<0.7\pp$. \textbf{(c)}~MSE of bias estimates; the $Q{=}0$ row confirms unbiased recovery.}
  \label{fig:regime}
\end{figure}

\subsection{Cross-Domain Controls}
\label{sec:flight}

We apply the same analysis to US DoT flight-delay data~\cite{bts_ontime} (2018--2019, {$\sim$}12M flights, {$\sim$}2500 routes; SGDClassifier backbone without per-route parameters). Route-aware correction recovers about $+2\pp$ AUC while \platt yields $\Delta\AUC\approx 0$; the less-regularized \rescal is slightly stronger in this dense regime (appendix).
As a negative control, MovieLens-1M~\cite{harper2015movielens} with MF~\cite{koren2009mf}/NCF~\cite{he2017ncf} backbones yields $\Delta\AUC\approx 0$. Together, the controls confirm that stranded headroom is backbone-relative: it appears when the backbone leaves residual entity-level bias and weakens once those effects are already modeled.

\section{Discussion and Limitations}
\label{sec:limitations}

As a batch method, \slc should be refreshed on roughly the same cadence as global calibration.
In extremely sparse settings (AS09: median 3~obs/item), Kalman shrinkage still improves AUC but can incur a small NLL cost vs.\ \platt{} ($+0.004$).
The additive per-item form (Proposition~\ref{prop:additive}) is MSE-optimal among item-only corrections but cannot capture student\,{$\times$}\,item interactions; consistent gains across five backbones suggest the item-marginal component dominates.
The effect is backbone-relative: \slc adds less in dense regimes (flight-delay) or when item effects are already modeled (MovieLens).
Temporal tracking provides no benefit at current densities; Proposition~\ref{prop:detect} predicts a viability threshold on the order of~$10^5$ obs/item.
Student- and skill-level random effects and fairness audits are left to future work.

\section{Conclusion}
\label{sec:conclusion}

Global score-only calibration is structurally AUC-invariant; per-item shrinkage correction recovers stranded discrimination that no global calibrator can access.
\slc is lightweight ($O(N{+}KT)$), backbone-agnostic, and concentrates its advantage on sparse items where shrinkage prevents catastrophic noise injection.
Temporal tracking is information-limited at current KT densities; Proposition~\ref{prop:detect} provides a quantitative viability criterion for future, denser deployments.

\begin{credits}
\subsubsection{\ackname}
This work was supported by JST CREST Grant Number JPMJCR22D1, Japan.

\subsubsection{\discintname}
The authors have no competing interests to declare that are relevant to the content of this article.
\end{credits}

\bibliographystyle{splncs04}
\bibliography{references}

\appendix
\section*{Appendix}
\input{appendix_body}

\end{document}

%% file: appendix_body.tex
\section{Per-Backbone Results}
\label{app:perbackbone}

Tables~\ref{tab:auc_perbb} and~\ref{tab:nll_perbb} report AUC and NLL for each backbone individually (mean $\pm$~std across 3~seeds).
Per-backbone results show that the AUC gain is broad across architectures, with AKT on Eedi ($-0.19\pp$) as the only exception; there, AKT's item-attention mechanism already captures most per-item variation, leaving little headroom for post-hoc correction.
SAKT consistently shows the largest gains ($+2.98$--$6.17\pp$), consistent with its weaker per-item modeling capacity.

\begin{table}[h]
\caption{Per-backbone AUC ($\uparrow$), mean $\pm$~std across 3~seeds. Best per column in \textbf{bold}.}\label{tab:auc_perbb}
\centering
\scriptsize
\setlength{\tabcolsep}{1.5pt}
\begin{tabular}{ll ccccc}
\toprule
Dataset & Method & AKT & DKT & SAKT & DKVMN & LPKT \\
\midrule
\multirow{9}{*}{AS17}
 & \base       & 0.7189\SD{0.003} & 0.6991\SD{0.000} & 0.6193\SD{0.001} & 0.6841\SD{0.001} & 0.6855\SD{0.000} \\
 & \platt      & 0.7189\SD{0.003} & 0.6991\SD{0.000} & 0.6193\SD{0.001} & 0.6841\SD{0.001} & 0.6855\SD{0.000} \\
 & \plattt     & 0.7198\SD{0.003} & 0.6991\SD{0.000} & 0.6193\SD{0.001} & 0.6841\SD{0.001} & 0.6855\SD{0.000} \\
 & \rescal     & 0.7275\SD{0.002} & 0.7222\SD{0.001} & 0.6613\SD{0.001} & 0.7096\SD{0.000} & 0.7117\SD{0.000} \\
 & \rescaliso  & 0.7274\SD{0.002} & 0.7221\SD{0.001} & 0.6613\SD{0.001} & 0.7094\SD{0.000} & 0.7118\SD{0.000} \\
 & \naive      & 0.7060\SD{0.002} & 0.7079\SD{0.001} & 0.6584\SD{0.001} & 0.6984\SD{0.000} & 0.7022\SD{0.000} \\
 & \textbf{\slc} & \textbf{0.7321}\SD{0.002} & \textbf{0.7325}\SD{0.001} & \textbf{0.6811}\SD{0.001} & \textbf{0.7214}\SD{0.000} & \textbf{0.7237}\SD{0.000} \\
 & \slct       & 0.7303\SD{0.002} & 0.7312\SD{0.001} & 0.6791\SD{0.000} & 0.7201\SD{0.000} & 0.7222\SD{0.000} \\
 & \slciso     & 0.7314\SD{0.002} & 0.7314\SD{0.001} & 0.6790\SD{0.000} & 0.7204\SD{0.000} & 0.7225\SD{0.000} \\
\midrule
\multirow{9}{*}{Eedi}
 & \base       & 0.7733\SD{0.000} & 0.7207\SD{0.000} & 0.7196\SD{0.000} & 0.7219\SD{0.000} & 0.6591\SD{0.000} \\
 & \platt      & 0.7733\SD{0.000} & 0.7207\SD{0.000} & 0.7196\SD{0.000} & 0.7219\SD{0.000} & 0.6591\SD{0.000} \\
 & \plattt     & 0.7733\SD{0.000} & 0.7207\SD{0.000} & 0.7196\SD{0.000} & 0.7219\SD{0.000} & 0.6594\SD{0.000} \\
 & \rescal     & 0.7734\SD{0.000} & 0.7589\SD{0.000} & 0.7568\SD{0.000} & 0.7592\SD{0.000} & 0.7188\SD{0.000} \\
 & \rescaliso  & 0.7734\SD{0.000} & 0.7589\SD{0.000} & 0.7568\SD{0.000} & 0.7592\SD{0.000} & 0.7188\SD{0.000} \\
 & \naive      & 0.7509\SD{0.000} & 0.7433\SD{0.000} & 0.7433\SD{0.000} & 0.7447\SD{0.000} & 0.7051\SD{0.000} \\
 & \textbf{\slc} & 0.7714\SD{0.000} & \textbf{0.7643}\SD{0.000} & \textbf{0.7628}\SD{0.000} & \textbf{0.7650}\SD{0.000} & \textbf{0.7259}\SD{0.000} \\
 & \slct       & 0.7704\SD{0.000} & 0.7637\SD{0.000} & 0.7622\SD{0.000} & 0.7644\SD{0.000} & 0.7255\SD{0.000} \\
 & \slciso     & 0.7709\SD{0.000} & 0.7634\SD{0.000} & 0.7622\SD{0.000} & 0.7643\SD{0.000} & 0.7254\SD{0.000} \\
\midrule
\multirow{9}{*}{AS09}
 & \base       & 0.6925\SD{0.009} & 0.6987\SD{0.005} & 0.6084\SD{0.012} & 0.6817\SD{0.007} & 0.7099\SD{0.001} \\
 & \platt      & 0.6925\SD{0.009} & 0.6987\SD{0.005} & 0.6084\SD{0.012} & 0.6817\SD{0.007} & 0.7099\SD{0.001} \\
 & \plattt     & 0.6918\SD{0.008} & 0.6994\SD{0.004} & 0.6113\SD{0.012} & 0.6820\SD{0.007} & 0.7087\SD{0.002} \\
 & \rescal     & 0.6928\SD{0.009} & 0.6989\SD{0.005} & 0.6085\SD{0.012} & 0.6819\SD{0.007} & 0.7102\SD{0.001} \\
 & \rescaliso  & 0.6922\SD{0.008} & 0.6986\SD{0.005} & 0.6085\SD{0.013} & 0.6819\SD{0.007} & 0.7101\SD{0.001} \\
 & \naive      & 0.6712\SD{0.006} & 0.6706\SD{0.002} & 0.6316\SD{0.003} & 0.6686\SD{0.002} & 0.6847\SD{0.001} \\
 & \textbf{\slc} & \textbf{0.7131}\SD{0.011} & \textbf{0.7198}\SD{0.004} & \textbf{0.6570}\SD{0.003} & \textbf{0.7078}\SD{0.005} & \textbf{0.7355}\SD{0.002} \\
 & \slct       & 0.7089\SD{0.010} & 0.7147\SD{0.004} & 0.6532\SD{0.003} & 0.7035\SD{0.005} & 0.7300\SD{0.002} \\
 & \slciso     & 0.7108\SD{0.010} & 0.7123\SD{0.003} & 0.6512\SD{0.004} & 0.7033\SD{0.005} & 0.7295\SD{0.001} \\
\midrule
\multirow{9}{*}{Algebra}
 & \base       & 0.8168\SD{0.000} & 0.8254\SD{0.000} & 0.7918\SD{0.001} & 0.8179\SD{0.000} & 0.8158\SD{0.001} \\
 & \platt      & 0.8168\SD{0.000} & 0.8254\SD{0.000} & 0.7918\SD{0.001} & 0.8179\SD{0.000} & 0.8158\SD{0.001} \\
 & \plattt     & 0.8169\SD{0.000} & 0.8254\SD{0.000} & 0.7918\SD{0.001} & 0.8179\SD{0.000} & 0.8158\SD{0.001} \\
 & \rescal     & 0.8204\SD{0.001} & 0.8344\SD{0.000} & 0.8094\SD{0.001} & 0.8290\SD{0.000} & 0.8281\SD{0.001} \\
 & \rescaliso  & 0.8204\SD{0.001} & 0.8343\SD{0.000} & 0.8095\SD{0.001} & 0.8290\SD{0.000} & 0.8281\SD{0.001} \\
 & \naive      & 0.7947\SD{0.001} & 0.8206\SD{0.000} & 0.7998\SD{0.002} & 0.8167\SD{0.001} & 0.8168\SD{0.000} \\
 & \textbf{\slc} & \textbf{0.8245}\SD{0.001} & \textbf{0.8417}\SD{0.000} & \textbf{0.8216}\SD{0.001} & \textbf{0.8377}\SD{0.000} & \textbf{0.8372}\SD{0.000} \\
 & \slct       & 0.8225\SD{0.001} & 0.8408\SD{0.000} & 0.8206\SD{0.001} & 0.8367\SD{0.000} & 0.8362\SD{0.000} \\
 & \slciso     & 0.8242\SD{0.001} & 0.8409\SD{0.000} & 0.8209\SD{0.001} & 0.8369\SD{0.000} & 0.8365\SD{0.000} \\
\bottomrule
\end{tabular}
\end{table}

\begin{table}[h]
\caption{Per-backbone NLL ($\downarrow$), mean $\pm$~std across 3~seeds. Best per column in \textbf{bold}.}\label{tab:nll_perbb}
\centering
\scriptsize
\setlength{\tabcolsep}{3pt}
\begin{tabular}{ll ccccc}
\toprule
Dataset & Method & AKT & DKT & SAKT & DKVMN & LPKT \\
\midrule
\multirow{9}{*}{AS17}
 & \base       & 0.603\SD{0.003} & 0.607\SD{0.001} & 0.651\SD{0.001} & 0.617\SD{0.001} & 0.615\SD{0.000} \\
 & \platt      & 0.593\SD{0.002} & 0.605\SD{0.000} & 0.644\SD{0.000} & 0.614\SD{0.000} & 0.612\SD{0.000} \\
 & \plattt     & 0.592\SD{0.002} & 0.604\SD{0.000} & 0.643\SD{0.000} & 0.613\SD{0.000} & 0.611\SD{0.000} \\
 & \rescal     & 0.595\SD{0.003} & 0.591\SD{0.001} & 0.630\SD{0.001} & 0.601\SD{0.001} & 0.599\SD{0.000} \\
 & \rescaliso  & 0.587\SD{0.002} & 0.590\SD{0.001} & 0.627\SD{0.001} & 0.599\SD{0.000} & 0.598\SD{0.000} \\
 & \naive      & 0.662\SD{0.005} & 0.635\SD{0.004} & 0.666\SD{0.001} & 0.642\SD{0.002} & 0.635\SD{0.001} \\
 & \textbf{\slc} & \textbf{0.585}\SD{0.002} & \textbf{0.583}\SD{0.001} & \textbf{0.616}\SD{0.000} & \textbf{0.592}\SD{0.000} & \textbf{0.590}\SD{0.000} \\
 & \slct       & 0.587\SD{0.002} & 0.585\SD{0.001} & 0.618\SD{0.000} & 0.594\SD{0.000} & 0.591\SD{0.000} \\
 & \slciso     & 0.585\SD{0.002} & 0.585\SD{0.001} & 0.618\SD{0.000} & 0.594\SD{0.000} & 0.592\SD{0.000} \\
\midrule
\multirow{9}{*}{Eedi}
 & \base       & 0.540\SD{0.000} & 0.580\SD{0.000} & 0.582\SD{0.001} & 0.579\SD{0.000} & 0.612\SD{0.000} \\
 & \platt      & 0.539\SD{0.000} & 0.580\SD{0.000} & 0.581\SD{0.000} & 0.579\SD{0.000} & 0.612\SD{0.000} \\
 & \plattt     & 0.539\SD{0.000} & 0.580\SD{0.000} & 0.581\SD{0.000} & 0.579\SD{0.000} & 0.611\SD{0.000} \\
 & \rescal     & 0.540\SD{0.000} & 0.552\SD{0.000} & 0.554\SD{0.001} & 0.552\SD{0.000} & 0.581\SD{0.000} \\
 & \rescaliso  & 0.539\SD{0.000} & 0.552\SD{0.000} & 0.553\SD{0.000} & 0.551\SD{0.000} & 0.580\SD{0.000} \\
 & \naive      & 0.574\SD{0.001} & 0.573\SD{0.000} & 0.576\SD{0.001} & 0.574\SD{0.000} & 0.599\SD{0.000} \\
 & \textbf{\slc} & 0.541\SD{0.000} & \textbf{0.548}\SD{0.000} & \textbf{0.549}\SD{0.000} & \textbf{0.548}\SD{0.000} & \textbf{0.576}\SD{0.000} \\
 & \slct       & 0.542\SD{0.000} & 0.549\SD{0.000} & 0.550\SD{0.000} & 0.548\SD{0.000} & 0.576\SD{0.000} \\
 & \slciso     & 0.542\SD{0.000} & 0.548\SD{0.000} & 0.549\SD{0.000} & 0.547\SD{0.000} & 0.576\SD{0.000} \\
\midrule
\multirow{9}{*}{AS09}
 & \base       & 0.630\SD{0.035} & 0.590\SD{0.004} & 0.710\SD{0.012} & 0.629\SD{0.020} & 0.579\SD{0.003} \\
 & \platt      & 0.593\SD{0.017} & 0.577\SD{0.004} & 0.625\SD{0.004} & 0.598\SD{0.009} & 0.569\SD{0.002} \\
 & \plattt     & 0.593\SD{0.016} & 0.575\SD{0.004} & 0.621\SD{0.002} & 0.596\SD{0.008} & 0.567\SD{0.002} \\
 & \rescal     & 0.629\SD{0.035} & 0.589\SD{0.004} & 0.708\SD{0.012} & 0.629\SD{0.020} & 0.578\SD{0.003} \\
 & \rescaliso  & 0.594\SD{0.017} & 0.577\SD{0.004} & 0.625\SD{0.004} & 0.598\SD{0.007} & 0.571\SD{0.002} \\
 & \naive      & 0.781\SD{0.069} & 0.702\SD{0.010} & 0.917\SD{0.055} & 0.758\SD{0.028} & 0.694\SD{0.004} \\
 & \textbf{\slc} & 0.599\SD{0.025} & \textbf{0.575}\SD{0.006} & 0.636\SD{0.009} & 0.606\SD{0.015} & \textbf{0.563}\SD{0.003} \\
 & \slct       & 0.608\SD{0.027} & 0.582\SD{0.006} & 0.649\SD{0.011} & 0.617\SD{0.017} & 0.570\SD{0.003} \\
 & \slciso     & 0.622\SD{0.029} & 0.610\SD{0.008} & 0.653\SD{0.007} & 0.640\SD{0.011} & 0.590\SD{0.003} \\
\midrule
\multirow{9}{*}{Algebra}
 & \base       & 0.348\SD{0.006} & 0.328\SD{0.001} & 0.362\SD{0.003} & 0.335\SD{0.000} & 0.335\SD{0.000} \\
 & \platt      & 0.340\SD{0.003} & 0.327\SD{0.001} & 0.357\SD{0.001} & 0.334\SD{0.001} & 0.334\SD{0.000} \\
 & \plattt     & 0.340\SD{0.002} & 0.327\SD{0.001} & 0.357\SD{0.001} & 0.334\SD{0.001} & 0.334\SD{0.000} \\
 & \rescal     & 0.345\SD{0.006} & 0.322\SD{0.001} & 0.350\SD{0.003} & 0.327\SD{0.000} & 0.326\SD{0.000} \\
 & \rescaliso  & 0.331\SD{0.001} & 0.320\SD{0.001} & 0.339\SD{0.001} & 0.325\SD{0.001} & 0.325\SD{0.000} \\
 & \naive      & 0.557\SD{0.011} & 0.410\SD{0.002} & 0.462\SD{0.025} & 0.418\SD{0.002} & 0.410\SD{0.003} \\
 & \textbf{\slc} & \textbf{0.336}\SD{0.003} & \textbf{0.316}\SD{0.001} & \textbf{0.339}\SD{0.002} & \textbf{0.320}\SD{0.001} & \textbf{0.319}\SD{0.000} \\
 & \slct       & 0.340\SD{0.003} & 0.317\SD{0.001} & 0.342\SD{0.002} & 0.322\SD{0.001} & 0.321\SD{0.000} \\
 & \slciso     & 0.335\SD{0.002} & 0.321\SD{0.001} & 0.340\SD{0.001} & 0.325\SD{0.001} & 0.324\SD{0.000} \\
\bottomrule
\end{tabular}
\end{table}

\section{Classic Score-Only Calibrators}
\label{app:classic_calibrators}

Table~\ref{tab:classic_calibrators} makes the score-only comparison visible: temperature scaling matches \platt in AUC to four decimals on all four datasets, isotonic regression changes AUC only negligibly, and histogram binning can lower AUC through ties.
These results are averaged over the same five backbones as the main-text tables.

\begin{table}[h]
\caption{Classic score-only calibrators, averaged over 5 backbones at full calibration fraction. Temperature scaling matches \platt in AUC; isotonic regression changes AUC only marginally; histogram binning can reduce AUC through ties.}\label{tab:classic_calibrators}
\centering
\small
\setlength{\tabcolsep}{4pt}
\begin{tabular}{l cc cc cc cc}
\toprule
 & \multicolumn{2}{c}{Algebra} & \multicolumn{2}{c}{AS17} & \multicolumn{2}{c}{Eedi} & \multicolumn{2}{c}{AS09} \\
\cmidrule(lr){2-3} \cmidrule(lr){4-5} \cmidrule(lr){6-7} \cmidrule(lr){8-9}
Method & AUC & NLL & AUC & NLL & AUC & NLL & AUC & NLL \\
\midrule
\base   & 0.8135 & 0.341 & 0.6814 & 0.618 & 0.7189 & 0.579 & 0.6782 & 0.628 \\
\platt  & 0.8135 & 0.338 & 0.6814 & 0.613 & 0.7189 & 0.578 & 0.6782 & 0.592 \\
TS      & 0.8135 & 0.339 & 0.6814 & 0.614 & 0.7189 & 0.578 & 0.6782 & 0.613 \\
\iso    & 0.8135 & 0.335 & 0.6813 & 0.614 & 0.7189 & 0.578 & 0.6780 & 0.593 \\
\hist   & 0.8006 & 0.340 & 0.6793 & 0.614 & 0.7162 & 0.579 & 0.6766 & 0.592 \\
\bottomrule
\end{tabular}
\end{table}

\section{Sensitivity to Prior Variance $\sigma_b^2$}
\label{app:sigma_b}

The static \slc estimator (Eq.~6 in the main text) uses a fixed prior variance $\sigma_b^2 = 1.0$ for the per-item bias.
This appendix reports a controlled sensitivity sweep over $\sigma_b^2 \in \{0.01, 0.1, 0.5, 1.0, 5.0, 10.0, 100.0\}$ and explains why the result is backbone-independent.

We load the AS17 calibration and test splits (188\,554 and 188\,565 tokens, 3\,162 items; median 48 observations per item).
A constant backbone $p_0 = \bar{y}_{\text{calib}} \approx 0.55$ replaces the trained KT model.
For each $\sigma_b^2$ value we:
(i)~compute the static per-item bias $\hat{b}_i$ on the calibration set via Eq.~6,
(ii)~fit the two-parameter Offset-Platt link $(a^*, b_0^*)$ by IRLS on the same calibration set, and
(iii)~evaluate AUC and NLL on the held-out test set.

A constant backbone is sufficient for this sweep because the sensitivity of $\hat{b}_i$ to $\sigma_b^2$ is governed entirely by the shrinkage fraction
$\lambda_i = W_i / (1/\sigma_b^2 + W_i)$,
where $W_i = \sum_n p_n(1{-}p_n)$ is the total Fisher-information weight of item~$i$.
For a constant backbone, $W_i = n_i \cdot p_0(1{-}p_0) \approx 0.2475\,n_i$;
for any trained backbone with predictions spread over $[0,1]$, the per-token weight $p_n(1{-}p_n) \in [0, 0.25]$ averages to $\approx 0.20$--$0.24$, so $W_i$ differs by $<15\%$.
Because $\lambda_i$ is monotone in $W_i$ and saturates rapidly for $W_i \gg 1/\sigma_b^2$, the sensitivity pattern is backbone-invariant.
The constant backbone yields a slightly \emph{higher} $W_i$ (since $p(1{-}p)$ is maximized at $p = 0.5$), making this a conservative test: a trained backbone would show even less sensitivity.

Table~\ref{tab:sigma_b_sweep} reports the sweep.
Over the two-order-of-magnitude range $\sigma_b^2 \in [0.1, 10]$, AUC varies by $<0.4\pp$ and NLL by $<0.004$.
Only the extreme $\sigma_b^2 = 0.01$ (prior precision $= 100$, equivalent to heavy shrinkage toward zero) shows a visible AUC loss of ${\sim}0.9\pp$, consistent with excessive regularization suppressing genuine item biases.

\begin{table}[h]
\caption{Sensitivity of \slc to prior variance $\sigma_b^2$ on AS17 (constant backbone). The default $\sigma_b^2 = 1.0$ is shaded.}\label{tab:sigma_b_sweep}
\centering
\small
\begin{tabular}{r cc cc}
\toprule
$\sigma_b^2$ & AUC ($\uparrow$) & $\Delta$AUC (pp) & NLL ($\downarrow$) & $\Delta$NLL \\
\midrule
0.01  & 0.6445 & $+$14.45 & 0.6522 & $-$0.0147 \\
0.10  & 0.6504 & $+$15.04 & 0.6348 & $-$0.0321 \\
0.50  & 0.6532 & $+$15.32 & 0.6315 & $-$0.0354 \\
\rowcolor{black!8} 1.00  & 0.6537 & $+$15.37 & 0.6315 & $-$0.0354 \\
5.00  & 0.6535 & $+$15.35 & 0.6329 & $-$0.0340 \\
10.00 & 0.6534 & $+$15.34 & 0.6338 & $-$0.0331 \\
100.00& 0.6533 & $+$15.33 & 0.6354 & $-$0.0315 \\
\midrule
\multicolumn{5}{l}{\footnotesize Span over $[0.1, 10]$: AUC $= 0.33\pp$, NLL $= 0.003$.} \\
\bottomrule
\end{tabular}
\end{table}

\section{Connection Between the Static Correction Block and Ridge Logistic Regression}
\label{app:ridge_equiv}

We show that the iterative version of the static per-item correction block is exactly the corresponding IRLS procedure for $\ell_2$-penalized logistic regression with per-item intercepts and the backbone logit as a fixed offset.
This clarifies the objective behind Eq.~6 in the main text.
The deployed \slc in Algorithm~1 uses the associated single-pass blockwise estimate, followed by a separate offset-Platt fit.

\subsection{Setup}

Consider $N$ observations with binary labels $y_n \in \{0,1\}$, frozen backbone logits $\eta_{0,n}$, and item assignments $i_n \in \{1,\ldots,K\}$.
The per-item correction model is $p_n = \sigma(\eta_{0,n} + b_{i_n})$, where $\sigma$ denotes the sigmoid function.
The $\ell_2$-penalized negative log-likelihood (``ridge logistic regression'') is:
\begin{equation}\label{eq:ridge_obj}
  \mathcal{L}(\mathbf{b}) = -\sum_{n=1}^{N} \bigl[ y_n \log p_n + (1{-}y_n)\log(1{-}p_n) \bigr] + \frac{\lambda}{2}\sum_{i=1}^{K} b_i^2, \qquad \lambda = 1/\sigma_b^2.
\end{equation}

\subsection{Hessian is Diagonal}

The gradient of $\mathcal{L}$ with respect to $b_i$ is:
\[
  \frac{\partial \mathcal{L}}{\partial b_i} = -\!\sum_{n:\, i_n=i}\!(y_n - p_n) + \lambda\, b_i \;=\; -g_i + \lambda\, b_i,
\]
where $g_i \triangleq \sum_{n:\,i_n=i}(y_n - p_n)$ is the per-item score residual.

The Hessian entries are:
\[
  \frac{\partial^2 \mathcal{L}}{\partial b_i \,\partial b_j}
  = \begin{cases}
      W_i + \lambda & \text{if } i = j, \\
      0             & \text{if } i \neq j,
    \end{cases}
\]
where $W_i = \sum_{n:\,i_n=i} p_n(1{-}p_n)$ is the Fisher information weight.
\textbf{The off-diagonal is zero} because each observation $n$ involves exactly one item $i_n$: the derivative $\partial p_n / \partial b_j = 0$ whenever $j \neq i_n$.
Hence $H = \mathrm{diag}(W_1{+}\lambda, \ldots, W_K{+}\lambda)$.

\subsection{Newton Step Decomposes into $K$ Independent Scalar Updates}

The Newton--Raphson update $\mathbf{b}^{(\ell+1)} = \mathbf{b}^{(\ell)} - H^{-1}\nabla\mathcal{L}$ reduces to $K$ independent updates:
\begin{equation}\label{eq:ridge_update}
  b_i^{(\ell+1)}
  = b_i^{(\ell)} + \frac{g_i - \lambda\, b_i^{(\ell)}}{W_i + \lambda}
  = \frac{W_i \cdot b_i^{(\ell)} + g_i}{W_i + 1/\sigma_b^2}.
\end{equation}

\subsection{Comparison with the Iterated Static Correction Block}

The iterated static correction block computes:
\begin{enumerate}[nosep]
\item $p_n = \sigma(\eta_{0,n} + b_{i_n}^{(\ell)})$ for all $n$,
\item $W_i = \sum_{n:\,i_n=i} p_n(1{-}p_n)$, \quad $g_i = \sum_{n:\,i_n=i}(y_n - p_n)$,
\item $b_i^{(\ell+1)} = \frac{W_i \cdot b_i^{(\ell)} + g_i}{1/\sigma_b^2 + W_i}$.
\end{enumerate}

\textbf{This is identical to Eq.~\eqref{eq:ridge_update}.}
Given the same initialization $b_i^{(0)} = 0$, the iterated blockwise solver and ridge-logistic IRLS produce the same updates at every step and converge to the same fixed point.

\begin{quote}
\textbf{Proposition (Blockwise ridge connection).}
Under the conditions that (i) the backbone logit $\eta_{0,n}$ is a fixed offset, (ii) $\ell_2$ penalty $\lambda = 1/\sigma_b^2$ is applied only to the item intercepts $b_i$, and (iii) initialization is $b_i^{(0)} = 0$, the repeated blockwise updates of the static correction block are identical to the IRLS iterates of the corresponding penalized logistic model.
The converged block estimator is the posterior mean of a Gaussian random-intercept model under the Laplace approximation.
\end{quote}

The iterated blockwise solver and ridge logistic converge to the same static correction solution, but differ in implementation.
Standard ridge logistic regression constructs a design matrix of dimension $N \times K$ and solves a $K$-dimensional system at each Newton step, yielding $O(NK)$ cost per iteration.
\slc exploits the diagonal Hessian structure by computing per-item \texttt{bincount} aggregates ($O(N)$) followed by $K$ scalar divisions ($O(K)$), for a total cost of $O(N + K)$ per iteration.
This decomposition also enables seamless extension to temporal smoothing via the Rauch--Tung--Striebel smoother at cost $O(K \cdot T)$.

\subsection{Empirical Verification}

To confirm the theoretical equivalence, we run a fully-converged ridge logistic regression baseline (50~IRLS iterations, same $\lambda{=}1/\sigma_b^2$, followed by the same offset-Platt link) alongside the single-pass \slc on two datasets and two backbones (3~seeds each).
Table~\ref{tab:ridge_equiv} reports the difference: the maximum discrepancy is $0.07\pp$ in AUC and $0.001$ in NLL, both well within seed-level variance.
The sign of $\Delta$AUC is inconsistent across configurations, confirming that the residual gap is noise rather than a systematic bias from the single-pass approximation.

\begin{table}[h]
\caption{Ridge logistic regression vs.\ \slc (single-pass).
$\Delta = \text{Ridge} - \text{\slc}$, averaged over 3~seeds ($\pm$~std).
All differences are within seed-level noise.}\label{tab:ridge_equiv}
\centering
\small
\begin{tabular}{ll cc cc}
\toprule
Dataset & Backbone & Ridge AUC & \slc AUC & $\Delta$AUC & $\Delta$NLL \\
\midrule
AS17 & AKT & 0.7327\SD{0.0021} & 0.7321\SD{0.0025} & $+$0.07\pp & $-$0.001 \\
AS17 & DKT & 0.7329\SD{0.0010} & 0.7325\SD{0.0011} & $+$0.04\pp & $<$0.001 \\
AS09 & AKT & 0.7132\SD{0.0130} & 0.7131\SD{0.0130} & $+$0.01\pp & $-$0.001 \\
AS09 & DKT & 0.7195\SD{0.0044} & 0.7198\SD{0.0045} & $-$0.03\pp & $+$0.001 \\
\bottomrule
\end{tabular}
\end{table}

If the backbone scale $a$ is jointly optimized with $b_i$ (as in standard sklearn \texttt{LogisticRegression} with per-item one-hot features), the Hessian acquires off-diagonal blocks $\partial^2\mathcal{L}/\partial a\,\partial b_i \neq 0$, and the global Newton step no longer decomposes.
\slc therefore separates bias estimation ($b_i$ only) from link estimation ($(a, b_0)$ via offset-Platt), preserving the additive-offset parameterization of Corollary~1 in the main text without claiming a joint Newton step for the full pipeline.

\section{Link Ablation: Offset-Platt vs.\ Raw Sigmoid}
\label{app:link_ablation}

The main text (Section~4.3) reports that offset-Platt outperforms raw $\sigma(\eta_0 + \hat{b}_i)$ on ECE and NLL.
Table~\ref{tab:link_ablation} provides the full breakdown across datasets and link types.

\begin{table}[h]
\caption{Link ablation on AS09 and AS17 (5~backbones $\times$ 3~seeds, averaged). \textbf{Raw}: $\sigma(\eta_0 + \hat{b}_i)$; \textbf{+OP}: offset-Platt $\sigma(a\eta_0 + b_0 + \hat{b}_i)$. Static variants use $b_i$ only; temporal variants add $u_i(t)$.}\label{tab:link_ablation}
\centering
\small
\begin{tabular}{ll ccc}
\toprule
Dataset & Method & AUC ($\uparrow$) & NLL ($\downarrow$) & ECE ($\downarrow$) \\
\midrule
\multirow{6}{*}{AS09}
 & \platt (no per-item) & 0.6782 & 0.5924 & 0.0498 \\
 & Raw temporal        & 0.7016 & 0.6275 & 0.1071 \\
 & Raw temporal + OP   & 0.7021 & 0.6052 & 0.0830 \\
 & Raw static          & 0.7060 & 0.6206 & 0.1051 \\
 & \textbf{Static + OP (\slc)} & \textbf{0.7066} & \textbf{0.5956} & \textbf{0.0763} \\
\midrule
\multirow{6}{*}{AS17}
 & \platt (no per-item) & 0.6814 & 0.6134 & 0.0116 \\
 & Raw temporal        & 0.7170 & 0.5971 & 0.0317 \\
 & Raw temporal + OP   & 0.7166 & 0.5951 & 0.0214 \\
 & Raw static          & 0.7182 & 0.5955 & 0.0279 \\
 & \textbf{Static + OP (\slc)} & \textbf{0.7182} & \textbf{0.5931} & \textbf{0.0170} \\
\bottomrule
\end{tabular}
\end{table}

Adding offset-Platt consistently improves ECE and NLL without sacrificing AUC.
On AS09: offset-Platt reduces ECE from 10.51\% to 7.63\% ($-2.88\pp$) and NLL from 0.621 to 0.596 ($-0.025$).
On AS17: ECE from 2.79\% to 1.70\% ($-1.09\pp$) and NLL from 0.596 to 0.593 ($-0.002$).
The offset-Platt link correctly treats $\hat{b}_i$ as a random effect that should not be rescaled by the global parameter $a$, matching the GLMM parameterization (Corollary~1 in the main text).

\section{Calibration-Fraction Sweep}
\label{app:calib_frac}

The main text (Section~4.5) reports that $\Delta$AUC scales monotonically with calibration fraction.
Table~\ref{tab:calib_frac} provides the detailed results on AS17 and AS09, averaged over 5~backbones $\times$ 3~seeds.
All numbers in Table~\ref{tab:calib_frac} use the final single-pass \slc pipeline.

\begin{table}[h]
\caption{Calibration-fraction sweep on AS17 and AS09 (5~backbones $\times$ 3~seeds, averaged). \slc's $\Delta$AUC over \platt scales monotonically with calibration data, confirming signal-driven improvement.}\label{tab:calib_frac}
\centering
\small
\begin{tabular}{l cccc cccc}
\toprule
 & \multicolumn{4}{c}{AS17} & \multicolumn{4}{c}{AS09} \\
\cmidrule(lr){2-5} \cmidrule(lr){6-9}
Calib frac & \platt & \slc & $\Delta$AUC & $\Delta$NLL & \platt & \slc & $\Delta$AUC & $\Delta$NLL \\
\midrule
10\% & 0.6814 & 0.6938 & $+1.24$ & $-0.003$ & 0.6782 & 0.6798 & $+0.16$ & $+0.008$ \\
20\% & 0.6814 & 0.7001 & $+1.87$ & $-0.007$ & 0.6782 & 0.6830 & $+0.48$ & $+0.010$ \\
50\% & 0.6814 & 0.7090 & $+2.76$ & $-0.013$ & 0.6782 & 0.6866 & $+0.84$ & $+0.012$ \\
100\% & 0.6814 & 0.7182 & $+3.68$ & $-0.020$ & 0.6782 & 0.7066 & $+2.84$ & $+0.003$ \\
\bottomrule
\end{tabular}
\end{table}

On AS17, both $\Delta$AUC and $\Delta$NLL improve monotonically as more calibration data become available, confirming that the gain is signal-driven rather than an artifact of a particular split.
On AS09, $\Delta$AUC is also monotone, but NLL remains slightly above \platt throughout; the gap shrinks from $+0.012$ at 50\% calibration data to $+0.003$ at 100\%.
This matches the main-text conclusion that in extremely sparse regimes, \slc recovers substantial ranking headroom while retaining a small proper-score trade-off against \platt.

\section{Flight-Delay and MovieLens Experiments}
\label{app:generality}

\subsection{Flight-Delay (Positive Control)}

The main text (Section~4.6) uses flight-delay as a positive control for the claim that the same phenomenon can arise beyond education when the deployed backbone leaves route-level bias.
Table~\ref{tab:flight} provides the full results across two backbone variants and all methods.

We use US Department of Transportation on-time performance data from 2018--2019 ({$\sim$}12M flights, {$\sim$}2500 routes).
We evaluate two backbone variants: \textsf{bbA} (SGDClassifier with carrier/origin/dest features, no per-route parameters) and \textsf{bbB} (the same model family with different regularization).
The temporal split uses 2018 for training, early 2019 for calibration, and late 2019 for test, with three random seeds for backbone training.

\begin{table}[h]
\caption{Flight-delay experiment (2~backbones $\times$ 3~seeds, mean $\pm$~std). Route-aware correction recovers AUC headroom that score-only calibration cannot access; in this dense regime, the less-regularized \rescal baseline attains the highest AUC.}\label{tab:flight}
\centering
\small
\setlength{\tabcolsep}{4pt}
\begin{tabular}{l cc cc}
\toprule
 & \multicolumn{2}{c}{bbA} & \multicolumn{2}{c}{bbB} \\
\cmidrule(lr){2-3} \cmidrule(lr){4-5}
Method & AUC & NLL & AUC & NLL \\
\midrule
\base            & 0.5813\SD{0.012} & 0.5511 & 0.5823\SD{0.012} & 0.5763 \\
\platt           & 0.5813\SD{0.012} & 0.4694 & 0.5823\SD{0.012} & 0.4693 \\
\plattt          & 0.5734\SD{0.016} & 0.4811 & 0.5755\SD{0.014} & 0.4800 \\
\rescal          & 0.6121\SD{0.005} & 0.4847 & 0.6134\SD{0.006} & 0.4899 \\
\slc (static+OP) & 0.6025\SD{0.007} & 0.5099 & 0.6019\SD{0.006} & 0.5463 \\
\slc+Iso         & 0.6038\SD{0.011} & 0.4672 & 0.6007\SD{0.013} & 0.4678 \\
\bottomrule
\end{tabular}
\end{table}

\base and \platt produce identical AUC (confirming Lemma~1).
\plattt \emph{decreases} AUC ($-0.8\pp$ on bbA), showing that time-only conditioning without route identity is harmful.
Per-route methods (\rescal, \slc) recover $+2$--$3\pp$ AUC headroom, so the positive-control conclusion is structural rather than method-specific.
\rescal achieves the highest AUC, while \slc is slightly more conservative in this dense regime; this matches the main-text density analysis, where shrinkage helps most when per-item calibration data are scarce and can add bias when route-level data are abundant.
\slc+Iso achieves the best NLL (0.467--0.468) while preserving the route-aware AUC gain.

\subsection{MovieLens-1M (Negative Control)}

The main text reports that MovieLens-1M with a matrix-factorization backbone yields $\Delta\AUC\approx 0$.
Table~\ref{tab:ml1m} confirms this.

We use MovieLens-1M ({$\sim$}1M ratings, 3706 movies, 6040 users) as a binary classification task, with ratings $\ge 4$ treated as positive.
The two backbones are \textsf{MF} (matrix factorization with per-item embeddings) and \textsf{NCF} (neural collaborative filtering with per-item embeddings).
We use a temporal split by timestamp and report means over 3~seeds.

\begin{table}[h]
\caption{MovieLens-1M negative control (3~seeds, mean). When the backbone already models per-item effects, post-hoc per-item correction provides negligible or no benefit.}\label{tab:ml1m}
\centering
\small
\begin{tabular}{l cc cc}
\toprule
 & \multicolumn{2}{c}{MF backbone} & \multicolumn{2}{c}{NCF backbone} \\
\cmidrule(lr){2-3} \cmidrule(lr){4-5}
Method & AUC & NLL & AUC & NLL \\
\midrule
\base    & 0.7949 & 0.5483 & 0.7885 & 0.5682 \\
\platt   & 0.7949 & 0.5469 & 0.7885 & 0.5548 \\
\rescal  & 0.7949 & 0.5481 & 0.7882 & 0.5667 \\
\slc     & 0.7951 & 0.5469 & 0.7843 & 0.5599 \\
\naive   & 0.7945 & 0.5507 & 0.7572 & 0.6479 \\
\bottomrule
\end{tabular}
\end{table}

On the MF backbone, $\Delta$AUC between \base and \slc is $+0.02\pp$---effectively zero.
On the NCF backbone, \slc \emph{decreases} AUC by $-0.42\pp$, and \naive degrades by $-3.13\pp$.
Together with flight-delay, this indicates that the effect is backbone-relative rather than domain-specific: when the backbone already incorporates item effects via embeddings, post-hoc per-item correction adds noise without recovering headroom.
The MovieLens result validates \slc's applicability boundary and demonstrates that the method is appropriately conservative (does not inflate metrics artificially).

%% file: main.bbl
\begin{thebibliography}{10}
\providecommand{\url}[1]{\texttt{#1}}
\providecommand{\urlprefix}{URL }
\providecommand{\doi}[1]{https://doi.org/#1}

\bibitem{assistments2017dataset}
{ASSISTments}: {ASSI}stments 2017 data mining dataset.
  \url{https://sites.google.com/view/assistmentsdatamining/dataset} (2017),
  accessed: 2026-02-23

\bibitem{breslow1993glmm}
Breslow, N.E., Clayton, D.G.: Approximate inference in generalized linear mixed
  models. Journal of the American Statistical Association  \textbf{88}(421),
  9--25 (1993)

\bibitem{corbett1995kt}
Corbett, A.T., Anderson, J.R.: Knowledge tracing: Modeling the acquisition of
  procedural knowledge. User Modeling and User-Adapted Interaction
  \textbf{4}(4),  253--278 (1995)

\bibitem{durbin2012time}
Durbin, J., Koopman, S.J.: Time Series Analysis by State Space Methods. Oxford
  University Press, 2nd edn. (2012)

\bibitem{efron2012large_scale}
Efron, B.: Large-Scale Inference: Empirical {B}ayes Methods for Estimation,
  Testing, and Prediction. Cambridge University Press (2010)

\bibitem{efron1973stein}
Efron, B., Morris, C.: Stein's estimation rule and its competitors---an
  empirical {B}ayes approach. Journal of the American Statistical Association
  \textbf{68}(341),  117--130 (1973)

\bibitem{fahrmeir1992kalman}
Fahrmeir, L.: Posterior mode estimation by extended {K}alman filtering for
  multivariate dynamic generalized linear models. Journal of the American
  Statistical Association  \textbf{87}(418),  501--509 (1992)

\bibitem{fawcett2006roc}
Fawcett, T.: An introduction to {ROC} analysis. Pattern Recognition Letters
  \textbf{27}(8),  861--874 (2006)

\bibitem{feng2009assistments}
Feng, M., Heffernan, N.T., Koedinger, K.R.: Addressing the assessment challenge
  with an online system that tutors as it assesses. User Modeling and
  User-Adapted Interaction  \textbf{19},  243--266 (2009)

\bibitem{frenkel2021cts}
Frenkel, L., Goldberger, J.: Network calibration by class-based temperature
  scaling. In: Proceedings of the European Signal Processing Conference
  (EUSIPCO) (2021)

\bibitem{ghosh2020akt}
Ghosh, A., Heffernan, N., Lan, A.S.: Context-aware attentive knowledge tracing.
  In: Proceedings of the 26th ACM SIGKDD International Conference on Knowledge
  Discovery \& Data Mining. pp. 2330--2340 (2020)

\bibitem{gruber2022better}
Gruber, S.G., Buettner, F.: Better uncertainty calibration via proper scores
  for classification and beyond. In: Advances in Neural Information Processing
  Systems (NeurIPS). vol.~35 (2022)

\bibitem{guo2017calibration}
Guo, C., Pleiss, G., Sun, Y., Weinberger, K.Q.: On calibration of modern neural
  networks. In: Proceedings of the 34th International Conference on Machine
  Learning (ICML). pp. 1321--1330 (2017)

\bibitem{hanley1982roc}
Hanley, J.A., McNeil, B.J.: The meaning and use of the area under a receiver
  operating characteristic ({ROC}) curve. Radiology  \textbf{143}(1),  29--36
  (1982)

\bibitem{harper2015movielens}
Harper, F.M., Konstan, J.A.: The {MovieLens} datasets: History and context. ACM
  Transactions on Interactive Intelligent Systems  \textbf{5}(4),  Article 19
  (2015)

\bibitem{he2017ncf}
He, X., Liao, L., Zhang, H., Nie, L., Hu, X., Chua, T.S.: Neural collaborative
  filtering. In: Proceedings of the 26th International Conference on World Wide
  Web (WWW). pp. 173--182 (2017)

\bibitem{vtirt2023}
Kim, Y., Sankaranarayanan, S., Piech, C., Thille, C.: Variational temporal
  {IRT}: Fast, accurate, and explainable inference of dynamic learner
  proficiency. In: Proceedings of the 16th International Conference on
  Educational Data Mining (EDM) (2023)

\bibitem{koren2009mf}
Koren, Y., Bell, R., Volinsky, C.: Matrix factorization techniques for
  recommender systems. Computer  \textbf{42}(8),  30--37 (2009)

\bibitem{lee2023kt_longitudinal}
Lee, M.P., Croteau, E., Gurung, A., Botelho, A.F., Heffernan, N.T.: Knowledge
  tracing over time: A longitudinal analysis. In: Proceedings of the 16th
  International Conference on Educational Data Mining (EDM) (2023)

\bibitem{lee2025concept_drift_kt}
Lee, M.P., Heffernan, N.T.: Concept drift detection for knowledge tracing. In:
  Proceedings of the 18th International Conference on Educational Data Mining
  (EDM), Doctoral Consortium (2025)

\bibitem{menon2021logitadj}
Menon, A.K., Jayasumana, S., Rawat, A.S., Jain, H., Veit, A., Kumar, S.:
  Long-tail learning via logit adjustment. In: Proceedings of the 9th
  International Conference on Learning Representations (ICLR) (2021)

\bibitem{pan2020field}
Pan, F., Ao, X., Tang, P., Lu, M., Liu, D., Xiao, L., He, Q.: Field-aware
  calibration: A simple and empirically strong method for reliable
  probabilistic predictions. In: Proceedings of The Web Conference (WWW). pp.
  729--739 (2020)

\bibitem{pandey2019sakt}
Pandey, S., Karypis, G.: A self-attentive model for knowledge tracing. In:
  Proceedings of the 12th International Conference on Educational Data Mining
  (EDM) (2019)

\bibitem{piech2015dkt}
Piech, C., Bassen, J., Huang, J., Ganguli, S., Sahami, M., Guibas, L.J.,
  Sohl-Dickstein, J.: Deep knowledge tracing. In: Advances in Neural
  Information Processing Systems (NeurIPS). vol.~28 (2015)

\bibitem{platt1999probabilistic}
Platt, J.C.: Probabilistic outputs for support vector machines and comparisons
  to regularized likelihood methods. In: Advances in Large Margin Classifiers.
  pp. 61--74. MIT Press (1999)

\bibitem{shen2021lpkt}
Shen, S., Liu, Q., Chen, E., Huang, Z., Huang, W., Yin, Y., Su, Y., Wang, S.:
  Learning process-consistent knowledge tracing. In: Proceedings of the 27th
  ACM SIGKDD Conference on Knowledge Discovery \& Data Mining. pp. 1452--1460
  (2021)

\bibitem{stamper2016kddcup}
Stamper, J.C., Pardos, Z.A.: The 2010 {KDD} cup competition dataset: Engaging
  the machine learning community in predictive learning analytics. Journal of
  Learning Analytics  \textbf{3}(2),  312--316 (2016)

\bibitem{tomani2022pts}
Tomani, C., Cremers, D., Buettner, F.: Parameterized temperature scaling for
  boosting the expressive power in post-hoc uncertainty calibration. In:
  Proceedings of the European Conference on Computer Vision (ECCV) (2022)

\bibitem{tomani2023dac}
Tomani, C., Waseda, F.K., Shen, Y., Cremers, D.: Beyond in-domain scenarios:
  Robust density-aware calibration. In: Proceedings of the 40th International
  Conference on Machine Learning (ICML). Proceedings of Machine Learning
  Research, vol.~202, pp. 34344--34368. PMLR (2023)

\bibitem{bts_ontime}
{U.S.\ Department of Transportation, Bureau of Transportation Statistics}:
  On-time: Reporting carrier on-time performance (1987--present).
  \url{https://www.transtats.bts.gov/}, accessed: 2026-02-23

\bibitem{wang2021tent}
Wang, D., Shelhamer, E., Liu, S., Olshausen, B., Darrell, T.: Tent: Fully
  test-time adaptation by entropy minimization. In: Proceedings of the 9th
  International Conference on Learning Representations (ICLR) (2021)

\bibitem{wang2013dynamic_irt}
Wang, X., Berger, J.O., Burdick, D.S.: Bayesian analysis of dynamic item
  response models in educational testing. The Annals of Applied Statistics
  \textbf{7}(1),  126--153 (2013)

\bibitem{wang2021eedi}
Wang, Z., Lamb, A., Saveliev, E., Cameron, P., Zaykov, Y.,
  Hern{\'a}ndez-Lobato, J.M., Turner, R.E., Baraniuk, R.G., Barton, C.,
  Peyton~Jones, S., Woodhead, S., Zhang, C.: Results and insights from
  diagnostic questions: The {NeurIPS} 2020 education challenge. In: NeurIPS
  2020 Competition and Demonstration Track. Proceedings of Machine Learning
  Research, vol.~133, pp. 191--205. PMLR (2021)

\bibitem{zadrozny2001obtaining}
Zadrozny, B., Elkan, C.: Obtaining calibrated probability estimates from
  decision trees and naive {B}ayesian classifiers. In: Proceedings of the 18th
  International Conference on Machine Learning (ICML). pp. 609--616 (2001)

\bibitem{zadrozny2002transforming}
Zadrozny, B., Elkan, C.: Transforming classifier scores into accurate
  multiclass probability estimates. In: Proceedings of the 8th ACM SIGKDD
  International Conference on Knowledge Discovery and Data Mining. pp. 694--699
  (2002)

\bibitem{zhang2017dkvmn}
Zhang, J., Shi, X., King, I., Yeung, D.Y.: Dynamic key-value memory networks
  for knowledge tracing. In: Proceedings of the 26th International Conference
  on World Wide Web (WWW). pp. 765--774 (2017)

\end{thebibliography}
